\documentclass[Afour,sageh,times]{sagej}

\usepackage{moreverb,url}
\usepackage[colorlinks,bookmarksopen,bookmarksnumbered,citecolor=red,urlcolor=red]{hyperref}

\newcommand\BibTeX{{\rmfamily B\kern-.05em \textsc{i\kern-.025em b}\kern-.08em
T\kern-.1667em\lower.7ex\hbox{E}\kern-.125emX}}

\usepackage{times}
\usepackage{xcolor,colortbl}
\definecolor{lavender}{rgb}{0.9, 0.9, 0.98}
\usepackage{bbding}
\usepackage{verbatim}
\usepackage{amsmath}
\usepackage{algorithm}
\usepackage[noend]{algpseudocode}
\usepackage{graphicx}
\usepackage{amssymb}
\usepackage{amsmath}
\usepackage{amsthm}
\usepackage{array}
\usepackage[draft]{fixme}
\usepackage[english]{babel}
\usepackage{verbatim}
\usepackage{tikz}
\usepackage{multirow}
\usepackage{xspace}
\usepackage{color}
\usepackage{tikz}
\usepackage{svg}
\usepackage{caption}
\usepackage{subcaption}
\usepackage[colorinlistoftodos]{todonotes}
\usepackage[font=small,labelfont=bf,tableposition=top]{caption}
\usepackage[normalem]{ulem}
\usepackage{pifont}
\newcommand{\cmark}{\ding{51}}%
\newcommand{\xmark}{\ding{55}}%
\newcommand\LyonsStrikeout{\bgroup\markoverwith
{\textcolor{cyan}{\rule[0.5ex]{2pt}{0.8pt}}}\ULon}
\usepackage{bm}
\usepackage{bbm}

\makeatletter
\def\BState{\State\hskip-\ALG@thistlm}
\makeatother                           

\DeclareCaptionLabelFormat{andtable}{#1~#2  \&  \tablename~\thetable}

\usepackage{booktabs, tabularx}
\usepackage{multirow}

\usepackage{xspace}

\newcounter{defCounter}
\newcounter{LemCounter}








\newtheorem{theorem}{Theorem}

\newtheorem{definition}[defCounter]{Definition}
\theoremstyle{definition}

\newtheorem{lemma}[LemCounter]{Lemma}

\newtheorem{problem}{Problem}

\newtheorem{remark}{Remark}

\theoremstyle{definition}
\newtheorem{assumption}{Assumption}



\begin{document}
\runninghead{Boldrer, Serra-G\'omez, Lyons, Kr\'atk\'y, Alonso-Mora, Ferranti}
\title{Rule-based Lloyd algorithm for multi-robot motion planning and control
with safety and convergence guarantees} %

\author{Manuel Boldrer\affilnum{1}, \'Alvaro Serra-G\'omez\affilnum{3}, Lorenzo
Lyons\affilnum{2}, V\'it Kr\'atk\'y\affilnum{1}, Javier Alonso-Mora\affilnum{2},
Laura Ferranti\affilnum{2}} 
\affiliation{\affilnum{1} Department of Cybernetics, Czech Technical University,
Karlovo namesti 13, 12135, Prague, Czechia \\
\affilnum{2} Department of Cognitive Robotics, Delft University of Technology,
Mekelweg 2, Delft, The Netherlands \\
\affilnum{3}  Leiden Institute of Advanced Computer Science, Leiden University,
Leiden, The Netherlands}
\corrauth{Manuel Boldrer}  
\email{boldrman@cvut.cz}



\begin{abstract} 
This paper presents a distributed rule-based Lloyd algorithm (RBL) for
  multi-robot motion planning and control. The main limitations of the basic
  Loyd-based algorithm (LB) concern deadlock issues and the failure to address
  dynamic constraints effectively. Our contribution is twofold. First, we show
  how RBL is able to provide safety and convergence to the goal region without
  relying on communication between robots, nor
  synchronization between them. We considered different dynamic
  constraints with control inputs saturation. Second, we show that the
  Lloyd-based algorithm (without rules) can be successfully used as a safety
  layer for learning-based approaches, leading to non-negligible benefits. We
  further prove the soundness, reliability, and scalability of RBL through
  extensive simulations, comparisons with the state of the art, and
  experimental validations on small-scale car-like robots, unicycle-like
  robots, omnidirectional robots, and aerial robots on the field.
\end{abstract}
\keywords{Multi-Robot Systems, Distributed Control, Lloyd-based algorithms.}
\maketitle
\section{Introduction} \label{sec:Introduction} The rise of mobile robotics is
poised to have a significant impact on our daily lives in the near future. One
essential skill that autonomous robots must have is the ability to move from
one location to another. However, this task can be quite challenging,
particularly in environments shared with other robots. An effective motion
planning and control algorithm should prioritize three key aspects: safety,
security, and convergence towards the intended destination.

Safety is of utmost importance, each robot should be equipped with collision
avoidance capabilities to prevent accidents with other robots or assets.
Additionally, security measures, such as reducing (or completely removing)
reliance on the communication network or employing algorithms that are resilient
against attacks and packet losses/delays, should be implemented to ensure the
robustness of the system. Finally, successful convergence towards the intended
destination requires avoiding the occurrence of deadlocks and live-locks, where
deadlocks refer to situations where robots are stuck and unable to proceed due
to conflicting actions, while live-locks involve continuous repetitive actions
that prevent progress towards the desired goal location. In this work we address
all the aforementioned issues by employing a novel Rule-Based Lloyd (RBL)
solution. \subsection{Related works} Multi-robot motion planning and control can
be broadly categorized into two main approaches: \textit{centralized} and
\textit{distributed} methods. Centralized approaches in multi-robot motion
planning and control involve a central unit responsible for computing control
inputs for all the robots in the system. These methods offer the advantage of
achieving optimal solutions more easily. However, they face scalability issues
when the number of robots increases, and they have to rely on a dependable
communication network to exchange real-time information between the coordinator
and the robots. Centralized approaches can be particularly valuable in
well-structured environments such as
warehouses~\cite{sharon2015conflict,li2021lifelong}, where optimizing the
solution is of utmost importance.

In contrast, distributed approaches in multi-robot motion planning and control
involve individual robots computing their own control inputs using local
information. While these methods may yield sub-optimal solutions, they offer
advantages in terms of scalability and robustness compared to centralized
approaches. Distributed methods are capable of handling a larger number of
robots and are more resilient to communication failures or limitations. They
allow robots to make decisions based on local perception, which can be
beneficial in dynamic and uncertain environments.

After opting to explore distributed methods due to their significant advantages
over centralized approaches, we can identify three primary categories for
multi-robot motion planning and control~\cite{cheng2018autonomous}:
\textit{reactive}, \textit{predictive planning}, \textit{learned-based}
methods. 

\subsubsection{Reactive methods} are  myopic by definition, hence each action
is taken on the basis of the current local state. Despite being computationally
efficient and usually easy to implement, deadlock conditions (local minima) as
well as live-lock may occur, preventing the robots to reach the goal positions.
The most popular approaches rely on velocity
obstacles~\cite{van2008reciprocal,van2011reciprocal,alonso2013optimal,boldrer2020socially1,arul2021v,godoy2020c,giese2014reciprocally,zheng2024mcca},
force fields~\cite{helbing1995social, boldrer2020socially, gayle2009multi},
barrier certificates~\cite{wang2017safety,celi2019deconfliction, zhou2021ego},
and dynamic window~\cite{fox1997dynamic}.

\subsubsection{Predictive planning methods} use information about the
neighboring robots in order to plan optimal solutions. Even though they can
substantially increase the performance metrics and may avoid deadlocks by
increasing the prediction horizon, usually they can be computationally
expensive and/or the robots need to exchange a large amount of information
through the communication network, which may not be highly reliable. Moreover,
these methods may fall in live-lock conditions, which prevent the robots to
converge to the goal positions. Solutions based on assigning priorities
(sequential solutions) or synchronous re-planning are adopted to avoid this
problem, but in the former case it introduces a hierarchy among the agents, in
the latter a global clock synchronization is required. The most popular
approaches rely on model predictive control
(MPC)~\cite{ferranti2022distributed,tajbakhsh2023conflict,zhu2019chance,arul2023ds,luis2019trajectory,kloock2023coordinated,10050739,trevisan2024biased},
elastic bands~\cite{chung2022distributed}, buffered Voronoi
cells~\cite{zhou2017fast,zhu2019b,abdullhak2021deadlock,pierson2020weighted},
legible motion~\cite{smithICRA2023,mavrogiannis2019multi} and linear spatial
separations~\cite{csenbacslar2023rlss}. Other interesting and recent solutions
are provided
by~\cite{park2022decentralized,9976221,vcap2015complete,kondo2023robust,csenbacslar2023dream,chen2022recursive,chen2023asynchronous,adajania2023amswarm,toumieh2024high},
all of them relying on a communication network. \subsubsection{Learning-based
methods} are promising and many researchers are investigating their
potential~\cite{orr2023multi,chen2017decentralized,chen2017socially,han2022reinforcement,xie2023drl,han2020cooperative,fan2020distributed,tan2020deepmnavigate,long2018towards,brito2021go,chen2019crowd,everett2018motion,qin2023srl,li2020graph,chen2023toward}.
    However, the weakness of these approaches is related to the lack of
    guarantees, especially for safety and convergence. In addition, the
    solution that these approaches provide are often non-explainable and
    generalization to situations different from the training scenario are hard
    to provide (e.g., number of robots, robots' encumbrance, robots' dynamics
    and velocities). With respect to the state of the art algorithms, our
    solution allows the robots to avoid collisions and converge to their goal
    regions without relying on communication, without relying on the neighbors'
    control inputs, and without the need of synchronization. In the literature,
    the main approaches that rely only on neighbors positions and encumbrances,
    and hence do not require communication between the robots, are
    BVC~\cite{zhou2017fast}, RLSS~\cite{csenbacslar2023rlss} and
    SBC~\cite{wang2017safety}, but none of them achieves a success rate of
    $1.00$ in very crowded scenarios. In fact, we tested BVC and SBC in
    environments with a crowdness factor $\eta= 0.452$; both of them result in
    a success rate SR = 0.00. On the other hand, RLSS has a higher success
    rate; however, it can experience deadlock/live-lock issues, as well as
    safety issues in an asynchronous setting (see~\cite{csenbacslar2023dream}).
    This is a severe problem also due to their required computational time,
    which in average is greater than $100$~(ms).

 Our algorithm prove to be a valid solution in scenarios where communication
 between robots is unfeasible. This situation can arise from several reasons,
 such as security concerns, cost reduction initiatives, the necessity for
 interactions between diverse robots from different companies, adversarial
 environmental conditions, or simply for technological
 limitations~\cite{gielis2022critical}. A safe and networkless motion planning
 and control algorithm can find plenty of applications including but not
 limited to agriculture-related applications, environmental monitoring,
 delivery services, warehouse and logistics, space exploration,  search and
 rescue among others.

\subsection{Paper contribution and organization}
This paper proposes a distributed algorithm for multi-robot motion planning and
control. Our contribution is twofold.

Firstly, we synthesize what we called the rule-based Lloyd algorithm (RBL), a
solution that can guarantee safety and convergence towards the goal region.
With respect to~\cite{boldrer2020lloyd}, here we focused on
the interaction between robots and: \emph{1)} We drastically improve the
multi-robot coordination performance, that is, we increased the success rate
from 0.00 to 1.00 in scenarios with more than $20$ robots; \emph{2)} We
provide, not only safety, but also sufficient conditions for convergence
towards the goal regions; and \emph{3)} We extend the algorithm to account for
dynamic constraints  and control inputs saturation by leveraging on model
predictive controller (MPC). Secondly, we show how the basic Lloyd-based
algorithm (without rules) can be used effectively as a \textit{safety layer} in
learning-based methods. We present the Learning Lloyd-based algorithm (LLB),
where we used reinforcement learning~\cite{serragomez2023a} to synthesize a
motion policy that is intrinsically safe both during learning and during
testing. We show that it benefits the learning phase, enhancing performance,
since safety issues are already solved. By using the Lloyd-based support, in
comparison to the same learning approach without the support (pure learning),
we measure an increase of the success rate from 0.56 to 1.00 in simple
scenarios with $5$ robots, until an increase from 0.00 to 1.00 in more
challenging scenarios with $50$ robots. Notice that, while RBL has guarantees
with respect to the convergence to the goal, on the other hand, LLB does not
ensure convergence. Nevertheless, it may perform better in terms of travel time
in certain scenarios. 

The paper is organized as follows. Sec. II provides the problem description.
Sec. III describes the proposed algorithms. Sec. IV  provides simulation
results and an updated comparison with the state of the art. Sec. V provides
experimental results. Finally, Sec. VI concludes the paper.

\section{Problem description}
\label{sec:problem description}

Our goal is to solve the following problem:
\begin{problem}
    Given $N$ robots, we want to steer each of them from the initial position
$p_i(0)$ towards a goal region defined as the ball set
  $\mathcal{B}(e_i,\varepsilon)$ centered in $e_i=[e_i^x,e^y_i]^\top$, with
  radius $\varepsilon$. We want to do it reliably in a safe and
  communication-less fashion.  
\end{problem}
\begin{assumption}\label{ass:assumption1}
  Each robot knows its position $p_i$, its encumbrance $\delta_i$, which
  defines the radius of the circle enclosing the robot, as well as the positions
  and encumbrances of its neighboring robots, i.e., $j \in \mathcal{N}_i$ if
  $\|p_i-p_j\|\leq 2 r_{s,i}$, where $\mathcal{N}_i$ denotes the set comprising
  the neighbors of the $i$-th robot, while $r_{s,i} = r_s$ is assumed to be the
  same for all the robots and it is defined as half of the sensing radius. In
  practice, both localization and neighbor detection, can be done by means of a
  localization module (e.g., GPS, encorders, cameras, IMU) and a vision module
(e.g., cameras, Lidar).  \end{assumption} \begin{assumption} To provide
  convergence guarantees, we assume that the mission space is an unbounded
  convex space. Notice that the algorithm would preserve safety even
  in cluttered environments~\cite{boldrer2022multi}, however it would entail
  making extra assumptions to prove the convergence.
\end{assumption}

\section{Approach}
\label{sec:approach}
The proposed approach is based on the Lloyd algorithm~\cite{lloyd1982least}. 
The main idea is to consider the following cost function
\begin{equation}\label{eq:jcov}
    J_{\operatorname{cov}}({p})=\sum_{i=1}^n \int_{\mathcal{V}_i}\left\|q-p_i\right\|^2 \varphi_i(q) d q,
\end{equation}
which was originally used in~\cite{cortes2004coverage} for static coverage
control, where the positions of the robots are represented by $p= [p_1,\dots,
p_N]^\top$, $p_i = [x_i,y_i]^\top$, the mission space is denoted by
$\mathcal{Q}$, the weighting function that measures the importance of the
points $q \in \mathcal{Q}$ is denoted by $\varphi_i(q) : \mathcal{Q}
\rightarrow \mathbb{R}_+$ and $\mathcal{V}_i$ indicates the Voronoi cell for
the $i$-th robot, which is defined as \begin{equation}\label{eq:voro}
    \mathcal{V}_i(p)=\left\{q \in \mathcal{Q} \mid\left\|q-p_i\right\| \leq\left\|q-p_j\right\|, \forall j \neq i\right\}.
\end{equation}
Under the assumption of single integrator dynamics $\dot{p}_i=u_i$, and by
following the gradient descent $\nabla J_{\text{cov}} =\frac{\partial
J_{\operatorname{cov}}(p)}{\partial p_i}$, it follows the proportional control
law
\begin{equation}\label{eq:lb}
    \dot{p}_i\left(\mathcal{V}_i\right)=-k_{p,i}\left(p_i-c_{\mathcal{V}_i}\right),
\end{equation}
where the tuning parameter $k_{p,i}>0$ is a positive value, and 
\begin{equation}\label{eq:centroiddef}
c_{\mathcal{V}_i}=\frac{\int_{\mathcal{V}_i} q \varphi_i(q) d q}{\int_{\mathcal{V}_i} \varphi_i(q) d q}
\end{equation}
is defined as the centroid position computed over the $i$-th Voronoi cell. It
can be proved that by imposing~\eqref{eq:lb} and assuming a time-invariant
$\varphi_i(q)$, each robot converges asymptotically to its Voronoi centroid
position. Our claim is that, by modifying the geometry of the cell
$\mathcal{V}_i$ and by properly designing a weighting function $\varphi_i(q)$,
we can obtain an efficient and effective algorithm for multi-robot motion
planning and control. Notice that, while the cell geometry to provide safety
was introduced by the basic LB
algorithm~\cite{boldrer2020lloyd,boldrer2019coverage,boldrer2021graph,
boldrer2022multi,boldrer2022unified},
one of the novelty introduced in this work, besides the use of the algorithm as
a safety layer for learning-based algorithm and the introduction of an MPC
formulation to deal with dynamical constraints, is the addition of rules in the
shaping of the weighting function $\varphi_i(q)$, which are crucial in order to
increase the performance and provide convergence guarantees to the goal
regions.

\subsection{Cell geometry $\mathcal{A}_i$} The reshape of the cell geometry is
necessary and sufficient to provide safety guarantees. By relying on the
classical Voronoi partition~\eqref{eq:voro}, we do not account for the
encumbrances of the robots in the scene, and neither for the limited sensing
range of the robots' sensors. Hence, as was proposed in
~\cite{boldrer2022unified, boldrer2020lloyd}, let us define the cell
$\mathcal{A}_i = \{\tilde{\mathcal{V}}_i \cap \mathcal{S}_i\}$, where

\begin{equation}
    \tilde{\mathcal{V}}_i=\left\{\begin{array}{l} \left\{q \in \mathcal{Q}
      \mid\left\|q-p_i\right\| \leq\left\|q-p_j\right\|\right\} \text { if }
      \frac{\left\|p_i-p_j\right\|}{2} \geq \Delta_{i j}   \\ \left\{q \in
      \mathcal{Q} \mid\left\|q-p_i\right\|
      \leq\left\|q-\tilde{p}_j\right\|\right\} \text { otherwise, }
    \end{array}\right.\
\end{equation}

$\forall  j \in \mathcal{N}_i$, where $\Delta_{i j}=\delta_j+\delta_i$
is the sum of the radius of encumbrance of robots $i$ and $j$,   and
$\tilde{p}_j=p_j+2\left(\Delta_{i j}-\frac{\left\|p_i-p_j\right\|}{2}\right)
\frac{p_i-p_j}{\left\|p_i-p_j\right\|}$, takes into account the encumbrance of
the robots, while

\begin{equation}
    \mathcal{S}_i=\left\{q  \in \mathcal{Q} \mid\left\|q-p_i\right\| \leq r_{s, i}\right\}.
    \end{equation}
\begin{theorem}[Safety]
  By imposing the control $\dot{p}_i(\mathcal{A}_i)$ in~\eqref{eq:lb},
  independently from the shape of the weighting function $\varphi_i(q)$, 
  collision avoidance is guaranteed at every instant of time.
\end{theorem}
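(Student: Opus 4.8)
The plan is to reduce the $N$-robot collision-avoidance claim to a pairwise forward-invariance argument on the inter-robot distances, using the geometry of the reshaped cell $\mathcal{A}_i$ to control the direction of motion near contact. For each ordered pair $i \neq j$, define the pairwise margin $h_{ij}(t) = \|p_i - p_j\|^2 - \Delta_{ij}^2$; since the robots are disks of radii $\delta_i$ and $\delta_j$ with $\Delta_{ij}=\delta_i+\delta_j$, ``collision avoidance at every instant'' is exactly the statement that $h_{ij}(t) \ge 0$ for all $t$ and all pairs, given $h_{ij}(0) \ge 0$. Hence it suffices to show that each set $\{h_{ij} \ge 0\}$ is forward invariant under the closed loop $\dot p_i = -k_{p,i}(p_i - c_{\mathcal{A}_i})$ obtained by imposing $\dot p_i(\mathcal{A}_i)$ in \eqref{eq:lb}.

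First I would record two structural facts that hold independently of $\varphi_i$. (i) The cell $\mathcal{A}_i = \tilde{\mathcal{V}}_i \cap \mathcal{S}_i$ is convex, being a finite intersection of half-planes (one per neighbor $j \in \mathcal{N}_i$) with the sensing ball $\mathcal{S}_i$; therefore, for any nonnegative weight $\varphi_i$, the weighted centroid $c_{\mathcal{A}_i}$ lies in $\mathcal{A}_i$, and in particular in each defining half-plane $H_i^j$. This is precisely why the conclusion does not depend on the shape of $\varphi_i$. (ii) Whenever $\|p_i - p_j\| \ge \Delta_{ij}$, the point $p_i$ itself lies in $H_i^j$, so the cell is nonempty and $c_{\mathcal{A}_i}$ is well defined. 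These two facts let me convert membership of $c_{\mathcal{A}_i}$ in $H_i^j$ into a sign condition on $\dot p_i$.

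The heart of the argument is a short computation at the contact configuration $\|p_i - p_j\| = \Delta_{ij}$, i.e. $h_{ij} = 0$. Here the pair is in the ``otherwise'' branch ($\|p_i-p_j\|/2 < \Delta_{ij}$), and substituting into the formula for $\tilde p_j$ gives $\tilde p_j = p_i$; passing to the limit as $\|p_i - p_j\| \downarrow \Delta_{ij}$, the constraint $H_i^j = \{q : \|q - p_i\| \le \|q - \tilde p_j\|\}$ degenerates to the half-plane through $p_i$ with outward normal $p_j - p_i$, namely $H_i^j = \{q : (q - p_i)^\top (p_j - p_i) \le 0\}$. Since $c_{\mathcal{A}_i} \in H_i^j$ by fact (i), this yields $(c_{\mathcal{A}_i} - p_i)^\top(p_i - p_j) \ge 0$, and hence $\dot p_i^\top (p_i - p_j) = k_{p,i}(c_{\mathcal{A}_i} - p_i)^\top (p_i - p_j) \ge 0$. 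The construction is symmetric in $i$ and $j$ (with $\Delta_{ij} = \Delta_{ji}$), so the identical computation gives $\dot p_j^\top (p_j - p_i) \ge 0$. Adding the two, $\dot h_{ij} = 2(p_i - p_j)^\top(\dot p_i - \dot p_j) \ge 0$ whenever $h_{ij} = 0$. By Nagumo's invariance condition, $\{h_{ij} \ge 0\}$ is forward invariant for every pair; intersecting over all pairs gives $h_{ij}(t)\ge 0$ for all $t$, which is exactly the claimed safety.

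The step I expect to be the main obstacle is making the invariance argument rigorous at the level of regularity the closed loop actually has: the map $p \mapsto c_{\mathcal{A}_i}$ is continuous but only piecewise smooth, since the combinatorial structure of $\mathcal{A}_i$ (the set of active neighbors, the branch switch at $\|p_i-p_j\|/2 = \Delta_{ij}$, and the clipping by $\mathcal{S}_i$) changes along trajectories, so $\dot p_i$ may be discontinuous. I would therefore phrase the boundary condition in terms of the lower Dini derivative of $h_{ij}$ (or argue with Filippov/Carath\'eodory solutions) and verify that the sign computation above survives the limit $\tilde p_j \to p_i$, which is the single genuinely delicate point; the remaining ingredients (convexity, centroid containment, and the $i\leftrightarrow j$ symmetry) are routine.
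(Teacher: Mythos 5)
Your proof is correct in substance but takes a genuinely different route from the paper's. The paper argues purely geometrically: $\mathcal{A}_i$ is convex, so $c_{\mathcal{A}_i}$ and the whole segment from $p_i$ to $c_{\mathcal{A}_i}$ lie inside $\mathcal{A}_i$, and since the buffered bisector defined by $\tilde p_j$ sits at distance $\|p_i-p_j\|-\Delta_{ij}$ from $p_i$ (hence exactly $\Delta_{ij}$ from $p_j$) whenever $\Delta_{ij}\le\|p_i-p_j\|<2\Delta_{ij}$, every point of $\mathcal{A}_i$ is at least $\Delta_{ij}$ from each neighbour, so tracking the centroid is declared a ``safe path.'' You instead introduce pairwise margins $h_{ij}=\|p_i-p_j\|^2-\Delta_{ij}^2$ and prove forward invariance of $\{h_{ij}\ge 0\}$ via Nagumo, using the same geometric fact in infinitesimal form (at contact the active half-plane passes through $p_i$ with inward normal $p_i-p_j$, forcing $\dot p_i^\top(p_i-p_j)\ge 0$, then symmetrising over $i$ and $j$). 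What your version buys is a rigorous treatment of the \emph{simultaneous} motion of both robots in continuous time, which the paper's phrasing glosses over (it tacitly freezes $p_j$ while $i$ moves); what the paper's version buys is an argument that survives discretisation and asynchrony, which is exactly what the later Remark on numerical implementation exploits. The one point you should nail down is the degeneracy you yourself flag: at exact contact $\|p_i-p_j\|=\Delta_{ij}$ the formula gives $\tilde p_j=p_i$, the constraint $\|q-p_i\|\le\|q-\tilde p_j\|$ becomes vacuous, and $c_{\mathcal{A}_i}$ is not literally confined to the limiting half-plane at that configuration, so Nagumo evaluated only on the boundary set is not quite available. The clean fix is the quantitative bound valid for $\Delta_{ij}\le\|p_i-p_j\|<2\Delta_{ij}$: the active facet is at distance $\|p_i-p_j\|-\Delta_{ij}$ from $p_i$, so $\dot p_i^\top\frac{p_j-p_i}{\|p_i-p_j\|}\le k_{p,i}\left(\|p_i-p_j\|-\Delta_{ij}\right)$, and summing the two robots' contributions yields $\frac{d}{dt}\left(\|p_i-p_j\|-\Delta_{ij}\right)\ge -(k_{p,i}+k_{p,j})\left(\|p_i-p_j\|-\Delta_{ij}\right)$; Gr\"onwall then keeps $\|p_i-p_j\|>\Delta_{ij}$ for all time without ever having to evaluate the vector field at the degenerate configuration, and it also disposes of your regularity worries about the piecewise-smooth centroid map.
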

    
\begin{proof} Since the set $\mathcal{A}_i = \{\tilde{\mathcal{V}}_i \cap
  \mathcal{S}_i\}$ is a convex set, then the centroid position
  $c_{\mathcal{A}_i} \in \mathcal{A}_i$. By definition of convex set, there
  exists a straight path from $p_i$ towards the centroid $c_{\mathcal{A}_i}$.
  Since $p_i, c_{\mathcal{A}_i} \in \{\tilde{\mathcal{V}}_i \cap
  \mathcal{S}_i\}$, by exploiting again the definition of convex set,  we ensure that
  the path towards the centroid is a safe path (i.e., no other robots on the
  path towards the centroid), hence the control law  $\dot{p}_i(\mathcal{A}_i)$
  as~\eqref{eq:lb} always preserves safety. \end{proof}

\subsection{The weighting function $\varphi_i(q)$}
While the safety can be guaranteed by properly shaping the cell geometry, the
convergence and the performance mainly depend on the definition of the function
$\varphi_i(q)$. We adopt a Laplacian distribution centered at $\bar{p}_i$, which
represents a point in the mission space that ultimately needs to converge to the
goal location. Then we define the function $\varphi_i(q)$ as follows:
\begin{equation}
\label{eq:phi}
   \varphi_i(q) = 
        \text{exp}{ \left( -\frac{\|q-\bar{p}_i\|}{\beta_i}\right)} 
\end{equation}
where
\begin{equation}
\label{eq:rho} \dot{\beta}_i(\mathcal{A}_i) = \begin{cases} -\beta_i \,\,\,
&\text{if}\,\, \|c_{\mathcal{A}_i}-p_i\| <d_{1} \, \land \,\\
&\|c_{\mathcal{A}_i} -c_{\mathcal{S}_i}\|> d_{2} \\ -(\beta_i-\beta^D_i)
&\text{otherwise.} \end{cases}
\end{equation}
\begin{equation} \label{eq:wp} \begin{split} \dot{\bar{p}}_i &= \begin{cases}
  -(\bar{p}_i -R^{p_i}(\pi/2-\epsilon)e_i) \,\,\, &\text{if}\,\,
  \|c_{\mathcal{A}_i}-p_i\| < d_{3} \, \land \,\\ &\| c_{\mathcal{A}_i} -
  c_{\mathcal{S}_i}\|> d_{4} \\ -(\bar{p}_i-e_i) &\text{otherwise,} \end{cases}
  \\ \bar{p}_i &= e_i  \,\,\, \text{if} \,\,\, \| p_i -
  \bar{c}_{\mathcal{A}_i}\| > \| p_i - c_{\mathcal{A}_i}\| \, \land \,
  \bar{p}_i = R^{p_i}(\pi/2-\epsilon)e_i. \\ \end{split} \end{equation}
Notice that $\beta_i^D, d_1, d_2, d_3, d_4$ are positive scalar values,
$R^{p_i}(\theta)$ indicates the rotation matrix centered in $p_i$, $\epsilon$ is a small number, $e_i$
is the $i$-th robot final goal position, while $\bar{{c}}_{\mathcal{A}_i}$ is
the centroid position computed over the cell $\mathcal{A}_i$ with $\bar{p}_i
\equiv e_i$.
For the sake of clarity in Figure~\ref{fig:vorocells} we depicted the
main variables and parameters for the $i$--th robot.
\begin{figure}
  \centering
  \includegraphics[width=1.0\columnwidth]{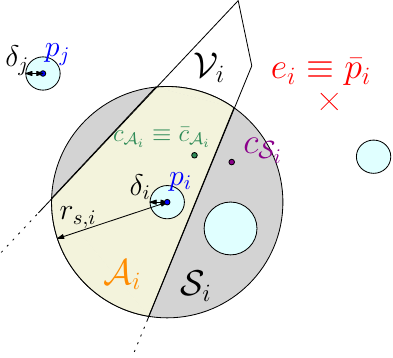}
  \caption{We depicted the main parameters and the variables associated with
  robot $i$. The robots are indicated with blue circles. Notice that the
  centroids' position strongly depend on the spreading factor $\beta_i$
  in~\eqref{eq:rho}. In particular, $\beta_i \rightarrow \infty$ cancels any
  attraction towards the goal $\bar{p}_i$ (e.g., $c_{\mathcal{S}_i} \equiv
  p_i)$. On the other hand, $\beta_i \rightarrow 0$ pushes the centroids towards
  the goal position $\bar p_i$, keeping them inside their cells.}
  \label{fig:vorocells}
\end{figure}
In the following we report some Definitions and Lemmas necessary to 
prove convergence in Theorem~\ref{pr:conv}.

\begin{definition}[Deadlock] \label{de:deadlock} The $i$--th robot is in a
  deadlock condition, if there exists a time $t_0$ such that $\forall t\geq
  t_0$, $p_i(t_0) = p_i(t)$
  and $p_i(t) \notin \mathcal{B}(e_i,r_{s,i})$. This condition is verified when
  $p_i \equiv c_{\mathcal{A}_i} \equiv \bar{c}_{\mathcal{A}_i}$ and $p_i \notin
  \mathcal{B}(e_i,r_{s,i})$, $\forall t \geq t_0$. In plain words, when the
  robot is outside of its goal region and its control inputs are zero for the
  rest of the mission. \end{definition}
\begin{definition}[Live-lock]
The $i$--th robot is in a live-lock condition if it exists a time $t_0$, such
that $\forall t>t_0$, the control input sequence $\{u_i(t_k)\}_{k=0}^{n}$ is
repeated and $p_i(t) \notin  \mathcal{B}(e_i,r_{s,i})$.
\end{definition}
\begin{lemma}[Deadlock in Symmetries]\label{lem:symmetry} By imposing the
  control law $\dot{p}_i(\mathcal{A}_i)$ in~\eqref{eq:lb}, and a time-invariant
  weighting function~\eqref{eq:phi} with a finite and positive value of $\beta_i
  =\beta^D>0$ and $\bar{p}_i= e_i$, if $\|p_i - e_i\|> r_{s,i}, \,\forall i
  =1,\dots,N$, according to Definition~\ref{de:deadlock}, deadlock can occur only
  if the set $\mathcal{A}_i$ is symmetric in the directions orthogonal to the goal
position. \end{lemma}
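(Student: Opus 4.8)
The plan is to reduce the deadlock condition of Definition~\ref{de:deadlock} to a single vanishing-moment equation and then analyze it in a frame adapted to the goal direction. Since $\bar{p}_i = e_i$ is imposed here, we have $c_{\mathcal{A}_i} \equiv \bar{c}_{\mathcal{A}_i}$, so by Definition~\ref{de:deadlock} a deadlock (outside $\mathcal{B}(e_i,r_{s,i})$) is equivalent to $p_i = c_{\mathcal{A}_i}$, i.e. to
\begin{equation}
\int_{\mathcal{A}_i} (q - p_i)\,\varphi_i(q)\,dq = 0. \label{eq:dlmoment}
\end{equation}
I would place $p_i$ at the origin and pick the orthonormal frame $(\hat{u},\hat{v})$ with $\hat{u} = (e_i - p_i)/\|e_i - p_i\|$ pointing at the goal and $\hat{v}\perp\hat{u}$, writing $q = s\hat{u} + t\hat{v}$. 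Then~\eqref{eq:dlmoment} splits into a radial equation $\int_{\mathcal{A}_i} s\,\varphi_i\,dq = 0$ and an orthogonal one $\int_{\mathcal{A}_i} t\,\varphi_i\,dq = 0$; the latter is the component tied to the claimed symmetry.

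The central observation is that $\varphi_i(q) = \exp(-\|q - e_i\|/\beta^D)$ depends only on the distance to $e_i$, and $e_i$ lies on the axis $\ell = \{p_i + \tau\hat{u}\}$; hence $\varphi_i$ is invariant under the reflection $\sigma:(s,t)\mapsto(s,-t)$ across $\ell$. The easy direction is then immediate: if $\mathcal{A}_i$ is symmetric across $\ell$ (symmetric in the direction orthogonal to the goal), the integrand $t\,\varphi_i$ is odd in $t$ while the domain is even, so the orthogonal moment vanishes, $c_{\mathcal{A}_i}$ is forced onto $\ell$, and a deadlock is geometrically admissible. For the converse I would split $\mathcal{A}_i = A^{+}\cup A^{-}$ into its $t>0$ and $t<0$ parts and use $\sigma$ to fold $A^{-}$ onto the half-plane $\{t>0\}$; since $\varphi_i$ is $\sigma$-invariant the overlap cancels and the orthogonal moment reduces to $\int_{A^{+}\setminus\sigma(A^{-})} t\,\varphi_i\,dq - \int_{\sigma(A^{-})\setminus A^{+}} t\,\varphi_i\,dq$, a difference of two nonnegative terms.

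The main obstacle is exactly this converse: the vanishing of that difference does not, by a soft symmetry argument alone, force $A^{+}\setminus\sigma(A^{-})$ and $\sigma(A^{-})\setminus A^{+}$ to be empty, because two distinct regions can carry equal weighted moment. I would close the gap by exploiting the rigid structure of $\mathcal{A}_i = \tilde{\mathcal{V}}_i \cap \mathcal{S}_i$: the disk $\mathcal{S}_i$ is centered at $p_i\in\ell$ and is therefore already $\sigma$-symmetric, so every asymmetry of $\mathcal{A}_i$ is carried by the finitely many straight bisectors defining $\tilde{\mathcal{V}}_i$. The asymmetric part is thus a finite union of convex pieces bounded by line segments, and combined with the strict monotonicity of $\varphi_i$ along $\hat{u}$ (recall $\|p_i - e_i\|> r_{s,i}$, so $e_i$ lies outside $\mathcal{S}_i$ and $\varphi_i$ increases strictly toward the goal across all of $\mathcal{A}_i$), a nonsymmetric placement of these edges yields a strictly signed orthogonal moment; cancelling it requires the mirror-matched, i.e. orthogonally symmetric, arrangement of the bisectors. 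Feeding this back—together with the radial equation $\int_{\mathcal{A}_i} s\,\varphi_i\,dq = 0$, which additionally requires neighbours to block the goal direction—then shows that deadlock can occur only when $\mathcal{A}_i$ is symmetric in the directions orthogonal to the goal.
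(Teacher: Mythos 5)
Your reduction of Definition~\ref{de:deadlock} to the vanishing of the weighted first moment of $\mathcal{A}_i$ about $p_i$, followed by the split into a radial and an orthogonal component in a frame aligned with $e_i-p_i$, is the same route the paper takes: the paper likewise observes that $\varphi_i$ in~\eqref{eq:phi} is reflection-symmetric about the goal axis and concludes directly from~\eqref{eq:centroiddef} that an asymmetric cell forces $p_i \neq c_{\mathcal{A}_i}$. You are more explicit than the paper, and you correctly isolate the step it passes over in silence: in the converse direction, folding $A^{-}$ onto $A^{+}$ leaves a difference of two nonnegative weighted moments, and its vanishing does not by itself force the two residual regions to be empty.

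The patch you offer does not close that gap, however. The claim that a nonsymmetric placement of the bisector edges yields a strictly signed orthogonal moment is asserted rather than proved, and as a standalone geometric statement it fails: one neighbour cutting a region from the $t>0$ side and a second neighbour cutting a differently shaped region from the $t<0$ side can be arranged, by continuity in the second neighbour's position and the intermediate value theorem, so that the two orthogonal moments cancel exactly while the cell remains asymmetric about the goal axis; the strict monotonicity of $\varphi_i$ along $\hat{u}$ controls the radial moment, not the orthogonal one. So your argument rigorously establishes only the easy direction (mirror symmetry implies the orthogonal moment vanishes) and, like the paper's own proof, does not establish the stated ``only if.'' To be fair, the paper makes exactly the same unproved leap, so you have reproduced its reasoning faithfully and diagnosed its weakness; what is missing in both is either a genuine proof that the simultaneous vanishing of both moments forces mirror symmetry of $\tilde{\mathcal{V}}_i \cap \mathcal{S}_i$, or a weakening of the lemma to the claim actually used downstream, namely that an asymmetric cell generically satisfies $c_{\mathcal{A}_i} \neq p_i$.
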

\begin{proof} It can be recognized from the centroid
  definition~\eqref{eq:centroiddef}. If $\|p_i - e_i\|> r_{s,i}$, and the
  $i$--robot does not interact with others, the weighting function is
  asymmetrically distributed over the $\mathcal{A}_i$ set (because of the
  attraction towards the goal $\bar{p}_i$). However, at the same time, it
  presents a symmetric distribution in the directions orthogonal to the goal
  direction $\frac{\bar{p}_i-p_i}{\|\bar{p}_i-p_i\|}$, according
  to~\eqref{eq:phi}. Let us assume the $i$--th robot to be in a scenario where
  other robots modify the shape of $\mathcal{A}_i$. If the resulting
  $\mathcal{A}_i$ cell is not symmetric in the directions orthogonal to the goal
  direction $\frac{\bar{p}_i-p_i}{\|\bar{p}_i-p_i\|}$, then, as a consequence of
  definition in~\eqref{eq:centroiddef}, $p_i \neq c_{\mathcal{A}_i}$. Hence,
  deadlock conditions can be verified only for $\mathcal{A}_i$ sets that are
  symmetric in the direction orthogonal to $\bar{p}_i$. Notice that this is true
  given that $\beta_i>0$, in fact in the case where $\beta_i=0$, other
  theoretical deadlock configuration may occur, as we will show.
 \end{proof} 

\begin{theorem}[Convergence]\label{pr:conv} 
  Let us assume to have an unbounded convex space and that the initial and goal
  positions are adequately apart from each other, i.e.,
  $\|p_i(0)-p_j(0)\|>\Delta_{ij}$,  $\|e_i-e_j\|> \Delta_{ij}$, $\forall i,j
  =1,\dots ,N$ with $i \neq j$.
  
  By imposing the control law $\dot{p}_i(\mathcal{A}_i)$ in~\eqref{eq:lb}
  and~\eqref{eq:phi},~\eqref{eq:rho},~\eqref{eq:wp} to compute the centroid
  position in~\eqref{eq:centroiddef}, and by selecting $d_1=d_3$, $d_2=d_4$
  where $0<d_1<\|p_i - c_{\mathcal{S}_i}\|$
  and $0<d_2 < \|p_i - c_{\mathcal{S}_i}\|$, with $\beta^D_i>0$, we have that
  $\lim_{t \rightarrow \infty} \|p_i(t) -e_i\| < r_{s,i} , \forall i = 1,\dots ,
  N.$
\end{theorem}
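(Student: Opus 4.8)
The plan is to separate the two requirements---safety and goal convergence---and to dispatch safety immediately by invoking the Safety theorem, which holds for \emph{any} shape of $\varphi_i$ and in particular for the time-varying $\varphi_i$ produced by the rules \eqref{eq:rho}--\eqref{eq:wp}. Hence every trajectory stays admissible and I only need to establish the limit $\|p_i(t)-e_i\|<r_{s,i}$. The separation hypotheses $\|p_i(0)-p_j(0)\|>\Delta_{ij}$ and $\|e_i-e_j\|>\Delta_{ij}$ are used respectively to guarantee that the initial configuration is feasible and that the all-at-goal configuration is itself collision-free, so that the target $p_i=e_i$ is an admissible fixed point of the closed loop; this is what makes the goal region reachable in principle.

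First I would analyze the nominal regime in which the rules are inactive, i.e.\ $\beta_i\equiv\beta^D_i>0$ and $\bar p_i\equiv e_i$. There the weighting function \eqref{eq:phi} is time-invariant, so the control \eqref{eq:lb} is exactly gradient descent of the coverage cost \eqref{eq:jcov}, and I can invoke the standard Lloyd result quoted after \eqref{eq:centroiddef}: every robot converges to its cell centroid, so the closed loop is attracted to the centroidal set $\{p_i=c_{\mathcal{A}_i}\}$. Using $J_{\mathrm{cov}}$ as a Lyapunov function together with LaSalle's invariance principle makes this rigorous. I would then classify the points of that set: a centroidal configuration with $p_i\in\mathcal{B}(e_i,r_{s,i})$ is a success, whereas one with $p_i\notin\mathcal{B}(e_i,r_{s,i})$ is exactly a deadlock in the sense of Definition~\ref{de:deadlock}. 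By Lemma~\ref{lem:symmetry}, such a deadlock can persist only when $\mathcal{A}_i$ is symmetric in the direction orthogonal to the goal direction $\tfrac{\bar p_i-p_i}{\|\bar p_i-p_i\|}$.

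The core of the argument is to show that the rules \eqref{eq:rho}--\eqref{eq:wp} destroy precisely these symmetric equilibria without creating new ones outside the goal regions. At an incipient deadlock the detection predicate is met: $\|c_{\mathcal{A}_i}-p_i\|<d_1$ signals that the robot has nearly stalled at its centroid, while $\|c_{\mathcal{A}_i}-c_{\mathcal{S}_i}\|>d_2$ signals that the cell is genuinely truncated by neighbors rather than being the free sensing disk, the thresholds being admissible because $0<d_1,d_2<\|p_i-c_{\mathcal{S}_i}\|$. The choice $d_1=d_3$, $d_2=d_4$ makes the $\beta_i$- and $\bar p_i$-branches switch simultaneously, so the response is coherent: $\beta_i\to 0$ concentrates $\varphi_i$ near $\bar p_i$ and amplifies the pull of the centroid, while $\bar p_i\to R^{p_i}(\pi/2-\epsilon)e_i$ rotates the weighting axis by nearly a right angle. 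Re-examining the centroid formula \eqref{eq:centroiddef} as in the proof of Lemma~\ref{lem:symmetry}, this rotation breaks the orthogonal symmetry the deadlock required, forcing $c_{\mathcal{A}_i}\neq p_i$, so that \eqref{eq:lb} resumes motion and the robot slides laterally out of the symmetric configuration.

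Finally I would close the loop using the reset in the second line of \eqref{eq:wp}: once the lateral motion has removed the symmetry---detected by $\|p_i-\bar c_{\mathcal{A}_i}\|>\|p_i-c_{\mathcal{A}_i}\|$, i.e.\ the true-goal centroid $\bar c_{\mathcal{A}_i}$ again pulls the robot forward---the virtual goal snaps back to $\bar p_i=e_i$ and the robot re-enters the nominal regime on the far side of the broken symmetry. Iterating this descend--escape--reset cycle, I would argue that no configuration with $p_i\notin\mathcal{B}(e_i,r_{s,i})$ is invariant and that $\|p_i-e_i\|$ cannot increase across a full cycle, which together with LaSalle forces convergence into $\mathcal{B}(e_i,r_{s,i})$ and invariance of the goal region once it is reached. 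The main obstacle I anticipate is precisely this last step: because $\varphi_i$ is time-varying during rule activation, $J_{\mathrm{cov}}$ is not monotone, and the closed loop is a \emph{coupled switched} system, so I must rule out live-locks---periodic escape cycles that never settle---by showing that each activation is a finite-duration transient after which strict descent toward $e_i$ resumes, and that the simultaneous switching of several robots' rules cannot synchronize into a persistent limit cycle.
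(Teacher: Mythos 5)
Your skeleton matches the paper's: LaSalle on $J_{\mathrm{cov}}$ for the time-invariant regime, Lemma~\ref{lem:symmetry} to reduce deadlock to symmetric cells, the rules \eqref{eq:rho}--\eqref{eq:wp} to break that symmetry, and a final live-lock exclusion. But two pieces that carry the actual weight in the paper are missing or wrong in your plan. First, you never address the deadlock mode the paper calls \emph{case a}: robots already sitting at their goals physically blocking a robot that is not. Rotating $\bar p_i$ does nothing there, and ``amplifying the pull of the centroid'' for the blocked robot is not enough either; what the paper needs, and proves by explicitly evaluating the centroid integral \eqref{eq:genericC} over a truncated (semicircular) cell, is that a robot \emph{at its goal} with $\beta_i=\beta^D_i>0$ has $\|p_i-c_{\mathcal{A}_i}\|$ growing with $\beta_i$ once its cell is cut by a neighbor with $\beta_j\to 0$, so the goal-sitter can be displaced by up to $d_2$ and the blocked robot can squeeze through. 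This quantitative ``pushing effect'' is also what justifies that the thresholds $0<d_1,d_2<\|p_i-c_{\mathcal{S}_i}\|$ are simultaneously attainable, i.e.\ that the rule branches can actually fire; your proposal asserts admissibility of the thresholds without this computation. Relatedly, your claim that ``$\|p_i-e_i\|$ cannot increase across a full cycle'' is false by design: the pushing effect deliberately increases it for goal-sitting robots (which is why the paper needs $d_2<r_{s,i}$, so they remain inside $\mathcal{B}(e_i,r_{s,i})$).

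Second, you correctly identify live-lock as the main obstacle but then leave it as an open obligation rather than discharging it. The paper's closure is concrete: live-lock induced by \eqref{eq:wp} can arise only when two goals are closer than $\Delta_{ij}$, which is excluded by the hypothesis $\|e_i-e_j\|>\Delta_{ij}$ (your reading of that hypothesis as merely making the all-at-goal configuration feasible misses this, its actual load-bearing use); cycles induced by \eqref{eq:rho} are confined to a $d_2$-neighborhood of the goal; and the hard reset $\bar p_i=e_i$ in the second line of \eqref{eq:wp} kills the slow-dynamics periodic orbits. Without these three observations your argument stops exactly where you say it does, so as written the proposal is an outline that reproduces the paper's strategy but omits the case analysis and the live-lock argument that constitute its substance.
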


\begin{proof}
  The proof is structured as follows: first we prove deadlock avoidance,
  then live-lock avoidance guarantees and finally convergence to the goal
  positions.

   By accounting for the effects of~~\eqref{eq:rho}, if a robot is in a
  deadlock condition (see Definition~\ref{de:deadlock}), it implies that
  $\beta_i \rightarrow 0$, and that the centroid position $c_{\mathcal{A}_i}$
  does not move. Hence, according to Definition~\ref{de:deadlock} deadlock
  occurs in two situations: $i.$ if $ p_i \equiv \bar{c}_{\mathcal{A}_i} \equiv
  {c}_{\mathcal{A}_i} \equiv \mathcal{A}_i $ and $ii.$ if $p_i \equiv
  \bar{c}_{\mathcal{A}_i} \equiv {c}_{\mathcal{A}_i} \in \partial
  \mathcal{A}_i$, where $\partial \mathcal{A}_i$ indicates the contour of the
  $\mathcal{A}_i$ set. Notice that $i. \implies ii.$ but not the contrary. The
  case $i.$ describes the condition where a robot is encircled by neighbors that
  are not moving. This is a transitory condition if the external robots are not
  in condition $ii.$, hence we can focus only on the second condition. Notice
  that the existence of external robots is ensured by the assumption on the
  environment, i.e., unbounded convex space.

  \begin{figure}
    \centering
    \includegraphics[width=1\columnwidth]{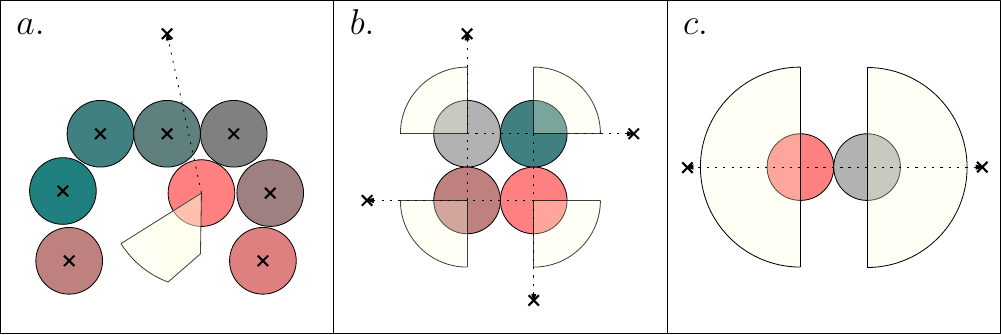}
    \caption{Theoretical deadlocking configurations, where $\beta_i = 0$,
    $\forall i=1 \dots N$. The crosses indicates the final goal positions for
    the robots (blue circles). The set $\mathcal{A}_i$ is depicted in yellow.}
    \label{fig:deadlock}
  \end{figure}
  \begin{figure}[t]
    \setlength{\tabcolsep}{0.05em}
    \centering
    \begin{tabular}{ccc}
      \includegraphics[width=.33\columnwidth]{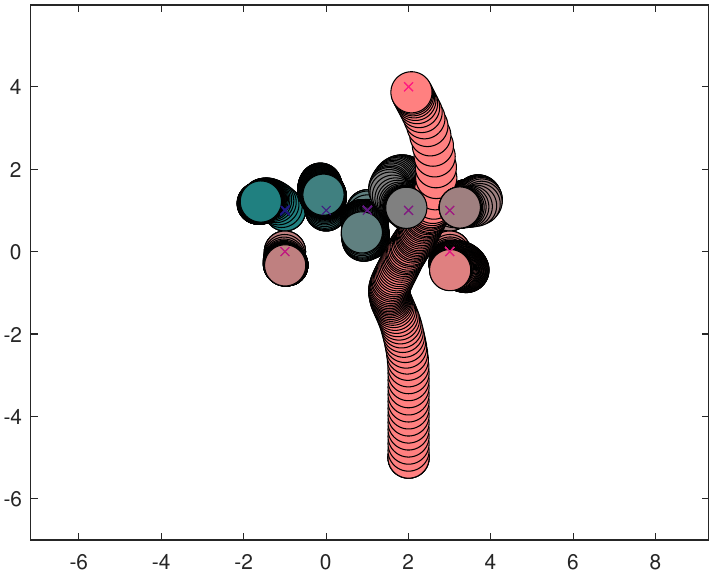} &
      \includegraphics[width=.33\columnwidth]{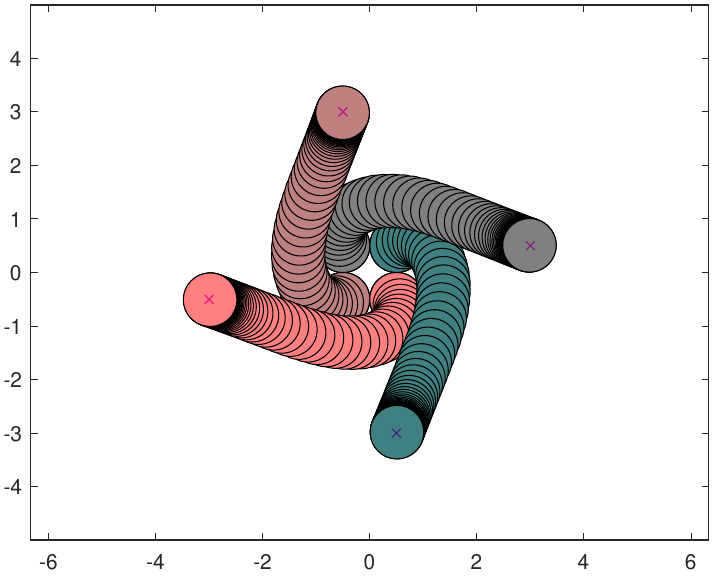}  & 
      \includegraphics[width=.337\columnwidth]{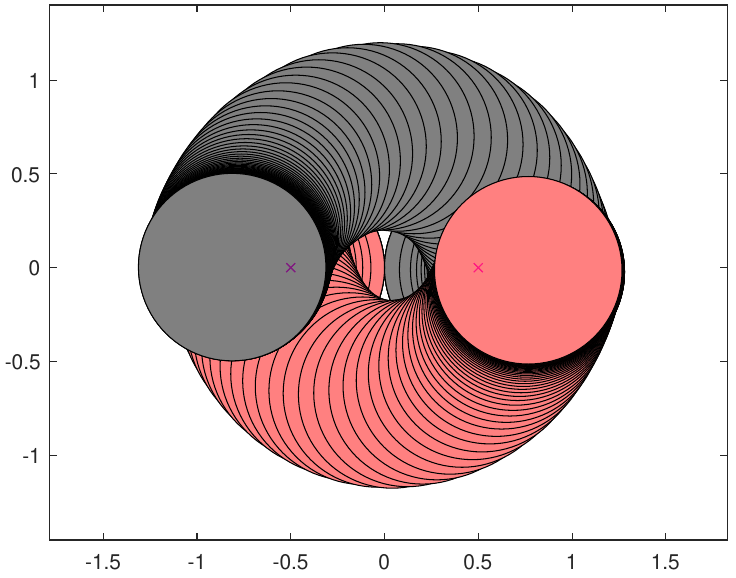} \\ 
      \footnotesize(a) & \footnotesize(b) & \footnotesize(c)
    \end{tabular}
    \caption{Simulation results starting from the theoretical deadlock
    conditions in Figure~\ref{fig:deadlock}. The desired spreading factor is
    selected as $\beta^D=0.5$ for every robot. The colored crosses
    indicates the goal positions, we depict the final robots'
    configuration.} \label{fig:deadlock1}
\end{figure}

 For the condition $ii.$ there exist three basic scenarios that satisfy
  the deadlocking conditions $p_i \equiv \bar{c}_{\mathcal{A}_i} \equiv
  {c}_{\mathcal{A}_i} \in \partial \mathcal{A}_i, \forall t \geq t_0$, with $p_i
  \notin \mathcal{B}(e_i,r_{s,i})$.  As we have shown in
  Lemma~\ref{lem:symmetry} in the case where the robots did not reach yet their
  goal positions, by assuming $\beta_i>0$, deadlock can occur only in symmetric
  conditions. On the other hand, by assuming $\beta_i = 0$, another theoretical
  configuration of deadlock is met, where robots and goal positions are located
  symmetrically. The remaining case is when only robot $i$ did not reach its
  goal position, if the robots physically hinder the $i$--th robot motion,
  deadlock conditions are met. More complex cases can be decomposed in one of
  the three scenarios or in a combination of them. We depict these scenarios in
  Fig.~\ref{fig:deadlock} for the sake of clarity, while in
  Fig.~\ref{fig:deadlock1} we report the simulation results obtained by imposing
  the robots' initial conditions to be equal to the theoretical deadlock
  conditions of Fig.~\ref{fig:deadlock}. In accordance with the simulation
  results, the deadlocks in cases \emph{a.}, \emph{b.}, and \emph{c.} do not
  occur by applying rules~\eqref{eq:rho} and~\eqref{eq:wp}.


  \paragraph*{Case a (Fig.~\ref{fig:deadlock}-a)} A certain number of robots
  have already reached their goal positions and they hinder the advancement of
  other robots that have not reached their goal locations yet. In this case, the
  value of $d_2$ in~\eqref{eq:rho} has to be selected in such a way
  that $\beta_i$ starts to decrease after a sufficient deflection from the goal,
  i.e., $ d_2\geq \alpha \delta_{\max}$, where $\delta_{\max}$ is the
  encumbrance of the biggest robot in the interaction, and $\alpha \in
  \mathbb{R}$ regulates the distance from the goal after which $\beta_i$ value
  is allowed to decrease. In practice, it is selected as any number in the
  interval $2 <\alpha< r_{s,i}/\delta_{\max}$. The lower bound is a sufficient
  condition derived geometrically, while the upper bound is set in order to
  allow the $i$--th robot to stay inside the set $\mathcal{B}(e_i,r_{s,i})$. By
  doing that, assuming $\beta_i^D >0$, each robot can be deflected from its goal
  position up to a distance $d_2$.

  This fact can be shown by computing the distance from the centroid position
  analytically. To simplify the notation, without loss of generality, we change
  reference frame, by considering $p_i = e_i =[0,0]^\top$, and our cell
  (semicircle $\mathcal{A}_i$) lying on the x-axis as our new origin. Hence, by
  recognizing that $\varphi(q)$ is symmetric with respect to the y-axis, we have
  $c_{\mathcal{A}_i}= [0, c_{\mathcal{A}_i}^y]^\top$, where
  \begin{equation}\label{eq:genericC}
    c_{\mathcal{A}_i}^y =\frac{\int_0^{\pi}\int_{0}^{r_{s, i}} z \sin \theta \,
    \text{exp}\left({-\frac{z}{\beta_i}}\right)z \,
    d\theta \, dz}{\int_{0}^{\pi} \int_{0}^{r_{s,
    i}} \text{exp}\left({-\frac{z}{\beta_i}}\right) z \, d\theta \,dz}. 
  \end{equation}
  Hence, we can write
  $$
  \left\|p_i-c_{\mathcal{A}_i}\right\|=\left\|\frac{2\int_{0}^{r_{s, i}} z^2 \,
  \text{exp}\left({-\frac{z}{\beta_i}}\right) d z}{\pi \int_{0}^{r_{s,
  i}} z \, \text{exp}\left({-\frac{z}{\beta_i}}\right) d z}\right\|.  $$
  Notice that we considered the case where we have the $i$--th robot in its goal
  position $e_i$, and a neighbor robot $j$ at distance $\Delta_{ij}
  =\delta_i+\delta_j$. By solving the integral we obtain
  \begin{small} 
  $$
  \left\|p_i-c_{\mathcal{A}_i}\right\|=\frac{2}{\pi}\left\|\frac{2 \beta_i^3 -
  \beta_i\text{exp}(-\frac{r_{s,i}}{\beta_i})(r_{s,i}^2 + 2 \beta^2 +2\beta_i
  r_{s,i})}{\beta_i^2 -
  \beta_i\text{exp}(-\frac{r_{s,i}}{\beta_i})(\beta_i+r_{s,i})}\right\|,
  $$
  \end{small}
  hence, as the value of $\beta_i$ increases, the distance
  $\|p_i-c_{\mathcal{A}_i}\|$ increases as well, which denotes the fact that each
  robot can be pushed from its goal position. The integral solution can be
  expressed in an analytic form for the special case where the goal position
  corresponds to the robot position, otherwise it needs to be computed
  numerically solving the following expression:
  \begin{equation}\label{eq:dist}\small \|p_i -
    c_{\mathcal{A}_i}\|=\left\|\frac{\int_{0}^{2 \pi}\int_{0}^{r(\theta)} z \sin
    \theta \,\text{exp}\left(\Gamma\right)z \,d z \, d \theta}{\int_{0}^{2 \pi}
    \int_{0}^{r(\theta)} \text{exp}\left(\Gamma\right)z \, dz \, d \theta}\right\|,
  \end{equation} where $\Gamma = -\frac{\sqrt{(z\cos\theta - e_i^x)^2+(z\sin\theta
  - e^y_i)^2}}{\beta_i}$ and $r(\theta)$ is defined according to the cell shape.
  Since $\beta_i>0$ implies $c_{\mathcal{A}_i} \in \text{int}(\mathcal{A}_i)$,
  then if a neighbor has $\beta_j \rightarrow 0$, pushing the $i$--th robot
  away from its goal by a distance $d_2$ is always possible.

 \paragraph*{Case b (Fig.~\ref{fig:deadlock}-b)} It is an unstable and
 unreachable equilibrium, since a small perturbation will lead leaving this
 configuration. We can argue that it is an equilibrium only if $\beta_i = 0$ for
 each robot, i.e., the centroids are positioned on the vertex of the cells; this
 is not possible since $\beta^D_i>0$ and $\beta_i$ evolves as~\eqref{eq:rho},
 $\forall i$. Notice that we are in a different condition than the one in
 Lemma~\ref{lem:symmetry}, because of the assumption on $\beta_i$.

\paragraph*{Case c (Fig.~\ref{fig:deadlock}-c)} According to
Lemma~\ref{lem:symmetry}, this case represents a deadlock in case that the
weighting function $\varphi(q)$ is time-invariant. However, the right hand side
behavior in~\eqref{eq:wp} introduces a geometric asymmetry of the
$\mathcal{A}_i$ set in the direction orthogonal to the position $\bar{p}_i$,
hence according to Lemma~\ref{lem:symmetry}, the robots avoid deadlock also in
this case. The same consideration can be done in similar conditions with $N>2$.
Notice that the values $d_1,d_2,d_3,d_4$ need to be selected properly, on the
basis of the values of $r_{s,i}$, $\beta^D$ and ${\delta_{\max}}$, i.e., it is
necessary that the conditions $\|c_{\mathcal{A}_i}-c_{\mathcal{S}_i}\| > d_2$,
$\|c_{\mathcal{A}_i}-c_{\mathcal{S}_i}\| > d_4$, $\|c_{\mathcal{A}_i}-p_i \| <
d_1$ and $\|c_{\mathcal{A}_i}-p_i \| < d_3$ can be verified, and in particular,
they can be verified at the same time. By satisfying these requirements we have
that both the rules~\eqref{eq:rho} and \eqref{eq:wp} are able to transition
between the two different dynamics during the mission, according to their
definition. Let us select $d_1=d_3$ and $d_2=d_4$. We can satisfy the
requirements by imposing $0<d_1 <d_{p_i,c_{\mathcal{S}_i}} $ and $0<d_2 <
d_{p_i,c_{\mathcal{S}_i}} $, where $d_{p_i,c_{\mathcal{S}_i}}$ is defined as
in~\eqref{eq:dist} and it can be computed numerically by selecting $\beta_i =
\beta^D_i$, $r_{s,i}>0$, and the goal position $e_i$ at a distance $r_{s,i}$
from the robot position $p_i$, hence our requirements are satisfied.

 Once we verified deadlock avoidance, we have to check for live-lock avoidance.
 Notice that, if the functions $\varphi_i(q)$ are time-invariant, cyclic motions
 are not possible, since the robots converge to their centroids
 positions~\cite{cortes2004coverage}.  In fact, by considering
 $J_{\text{cov}}(p,\mathcal{A})$~\eqref{eq:jcov}  as a Lyapunov function:
 $$\begin{aligned} \frac{d}{d t} J_{\operatorname{cov}}(p, \mathcal{A}) &
   =\sum_{i=1}^n \frac{\partial}{\partial p_i} J_{\operatorname{cov}}(p,
 \mathcal{A}) \dot{p}_i \\ & =\sum_{i=1}^n 2
 m_i\left(p_i-c_{\mathcal{A}_i}\right)^T
 \left(-k_{p,i}\left(p_i-c_{\mathcal{A}_i}\right)\right) \\ & =-2 k_{p,i}
 \sum_{i=1}^n m_i\left\|p_i-c_{\mathcal{A}_i}\right\|^2, \end{aligned}$$ where
 $m_i=\int_{\mathcal{A }_i} \varphi_i(q) d q$, we have convergence towards the
 centroid by the LaSalle's invariance principle. By introducing a time-varying
 $\varphi(q)$ function, we need to do some considerations. A cyclic back and
 forth motion can be introduced by~\eqref{eq:rho}, however, by construction it
 can happen only in the proximity of the goal (it depends on $d_2$ value) and if
 the goal positions of two or more robots are close one each others.

 Live-lock can be also introduced by~\eqref{eq:wp}. In this case, since we do
 not have deadlock conditions, by construction it can occur only if two or more
 robots have a close goal position, that is, $\|e_i-e_j\|< \Delta_{ij}$, which
 violates our assumption. In fact the other possible case would be when the
 robot is constantly  hindered by other robots to reach its goal position.
 However, this is not possible since deadlock does not occur, as we have shown,
 and because each robot can be deflected from its goal position (see
 \textit{case a}). Notice that the role of the reset condition $\bar{p}_i = e_i$
 in~\eqref{eq:wp} plays a crucial role since in practice slow dynamics in $p_i$
 changes may lead to periodic motions. 

 Since all the robots converge asymptotically to their own centroid
 $c_{\mathcal{A}_i}$ by the LaSalle's principle, and deadlock/live-lock (if
 $\|p_i - e_i\|> r_{s,i}$) do not occur, then all the robots will converge to
 their centroids' positions (or, if live-lock occurs, inside
 $\mathcal{B}(e_i,r_{s,i})$), which satisfy $\|c_{\mathcal{A}_i}-e_i\|<r_{s,i}$,
 hence the proof is complete.

\end{proof}

\begin{remark}[Numerical Simulations and Asynchrony]
    The simulations in Sec.~\ref{sec:Simulation results} are implemented in
    discrete time and with discretized cells $\mathcal{A}_i$, since computing
    the centroid position $c_{\mathcal{A}_i}$ in closed form is not always
    possible. Notice that the computation of the centroid position
    $c_{\mathcal{A}_i}$ can be done in a distributed fashion. In fact each
    $i$--th robot, by relying on its position and encumbrance, and positions and
    encumbrances of the neighbors, can compute its cell $\mathcal{A}_i$. Once
    the cell geometry is defined each robot discretizes its own cell and assigns
    a weight to each point $q \in \mathcal{A}_i$, by means of
    $\varphi_i(q)$~\eqref{eq:phi}. By approximating the integrals to compute the
    centroid positions with finite sums, each robot computes its centroid
    position $c_{\mathcal{A}_i}$.  Because of this approximation,
    $\beta_{i,\min}$ has to be set together with the cell discretization d$x$.
    While $k_{p,i}$ has to be selected on the basis of the time discretization
    d$t$. In particular, to preserve safety and at the same time to allow the
    robots to generate the input asynchronously, we can verify that it is
    sufficient to have $k_{p,i}$d$t \leq
    \frac{\|p_i-c_{\mathcal{A}_i}\|-\delta_i}{\|p_i - c_{\mathcal{A}_i}\|}$, by
    rewriting equation~\eqref{eq:lb} in discrete form. It means that by
    selecting $k_{p,i}$d$t = \frac{\|p_i-c_{\mathcal{A}_i}\|-\delta_i}{\|p_i -
    c_{\mathcal{A}_i}\|}$ the $i$--th robot reaches  the position $p_i(t+1) =
    p_i(t) +\frac{\|p_i(t)-c_{\mathcal{A}_i}\|
    -\delta_i}{\|p_i(t)-c_{\mathcal{A}_i}\|}(p_i-c_{\mathcal{A}_i})$ in one
    single time step, which is a safe spot by construction, even if the position
    of the neighbors are not updated for one time step. Moreover, since $\|p_i -
    c_{\mathcal{A}_i}\|<r_{s,i}$, we can conclude that $k_{p,i}$d$t \leq
    1-\frac{\delta_i}{r_{s,i}}$. By considering  d$t$ equal to the computational
    time, this is not a strict assumption according to the computational time
    required by the algorithm (see Sec.~\ref{sec:subcomput}). In the simulations
    in Sec.~\ref{sec:Simulation results} we selected d$x = 0.075$~(m), d$t =
    0.033$~(s). 
\end{remark}

\subsection{Learning policy with Lloyd-based support (LLB)}
\label{sec:learning}

The choices behind the selection of $\beta_i$~\eqref{eq:rho} and $\bar{p}_i$~\eqref{eq:wp} are crucial to obtain acceptable performance and convergence. In the previous section we introduced a way to achieve convergence with acceptable performance. However, the selection of $\beta_i$~\eqref{eq:rho} and $\bar{p}_i$~\eqref{eq:wp} is far from optimal, i.e., a lower travel time can be achieved. Hence we propose to select these values by means of a learned function. By relying on learning-based techniques the convergence cannot be guaranteed anymore, but the safety is still ensured, both during training and testing, as it does not depend on $\beta_i$ and $\bar{p}_i$. While prior works on learned navigation tasks require to properly shape the reward function to avoid collisions and at the same time to converge to the goal,
our method allows to focus on learning how to reach the goal as quickly as possible. 

Similarly to~\cite{serragomez2023a}, we learn a high-level attention-based navigation policy that encodes the goal relative position and the relative positions and velocities of the other robots in the neighborhood (notice that the velocity is computed by relying on the past and actual positions). The learned latent representation is mapped to the parameters of a diagonal Gaussian distribution that we sample to obtain $\beta_i$ and $\bar{p}_i$.  

\subsubsection{RL Formulation}
Similarly to \cite{brito2021go}, the observation vector of each robot $i$ is composed of its own information and relative states of all robots $j$ within its sensing range:
\begin{equation}
\begin{aligned}
    &{s}^t_i = [\Delta p_{e,i}, \dot{p}_i, \delta_{i}, l_i],  \\
    &{s}^t_j = [\Delta p_{j,i}, \dot{p}_j, \delta_{j}], \quad  \forall j \in \mathcal{N}_i,
\end{aligned}
\end{equation}
where ${s}^t_i$ and ${s}^t_j$ are the available information that robot $i$ has of itself and other robots $j$ in its sensing range.
Notice that these information are obtained by relying on the onboard sensors.  
The relative positions of the goal and other robots are represented by $\Delta p_{e,i}$ and $\Delta p_{j,i}$ 
While the hyper-parameter $l_i$ 
is initialized as $l_i = 0$, it becomes equal to $1$ when $p_{i} \in \mathcal{B}(e_i,\varepsilon)$ and it stays equal to $1$ until $p_{i} \notin \mathcal{B}(e_i,\varepsilon + d)$, where $d>0$ is an arbitrary distance. All positions and velocities are represented in polar coordinates. We denote the joint observed state of all robots in $\mathcal{N}_i$ by $\mathcal{O}^t_i$.


We seek to learn the optimal policy for each robot $i$: $\pi_i(s^t_i,
\mathcal{O}^t_i)$, that maps its observation of the environment to an action
vector: ${a}^t_i = (\Delta\bar{p}_i,\beta_i)$ with $\Delta\bar{p}_i =
\bar{p}_i-p_i$, eventually guiding the robot towards its goal. The action space
is bounded by $\Delta\bar{p}_i \in \mathcal{B}(0,r_{s,i})$, while we constrained
$\beta_i \in [0.1, 0.5]$.


We design the reward function to motivate the ego-robot to reach and maintain its goal position while allowing other robots to reach theirs.
The reward attributed to each robot $W_i$ is defined as:

\begin{equation}
W_i({s}_i^{t+1},\mathcal{O}_i^{t+1},{a}_i^t,{s}_i^{t},\mathcal{O}_i^{t}) = W_{t,i} + W_{g,i},
    \end{equation}
    where  $ W_{t,i} = w_t - \Delta p_{e,i}$, 
and

\begin{equation*}\
 W_{g,i} = \begin{cases}
   
        w_g & \quad \text{if } l_{i}^{t+1}-l_{i}^t=1 \\
        0 & \quad \text{if } l_{i}^{t+1}-l_{i}^t=0 \\
        -w_g & \quad \text{if } l_{i}^{t+1}-l_{i}^t=-1
\end{cases}
\end{equation*}
with $\Delta p_{e,i}$ being the increment of distance from robot $i$ to its
goal, and $w_g \in \mathbb{R}^{+}$and $w_t\in\mathbb{R}^{-}$being the reward for
robot $i$ achieving its goal, and the punishment for each time step that all
robots have not achieved their goal, respectively. Note that there is no need
for punishing collisions. Since the low-level controller always tracks the
centroid of the cell, which now depends on the learned parameters $\bar{p}_i =
p_i + \Delta\bar{p}_i$ and $\beta_i$, collisions cannot happen by definition.
The episode ends when all the robots are within a distance $\epsilon$ of their
goals or the time step limit has been reached. 

\subsubsection{Policy Network Architecture}

Similar to \cite{serragomez2023a} we learn a high-level attention-based policy that outputs the action vector ${a}'_i = (\Delta\bar{p}'_i,\beta'_i)$ conditioned to its observation of the environment.
The architecture of the learned policy, shown in Figure \ref{fig:architecture} follows the encoder-decoder network paradigm~\cite{serragomez2023a,lee2019set}. The available information of the environment (e.g., the goal and a variable number of agents inside the sensing range) is encoded through a self-attention block (SAB) that learns how each element's representation should be affected by the presence of others. Then a multi-headed attention block is used to pool the resulting latent representations conditioned to $s^t_i$. The resulting vector is then mapped through a fully connected layer FC  to the parameters of a diagonal Gaussian distribution and an estimate of the state-value function. The diagonal Gaussian distribution is then sampled to obtain ${a}'_i = (\Delta\bar{p}'_i,\beta'_i)$, which is remapped to the action-space bounds. The state-value estimate is used only during training by the algorithm used to train the architecture. 
    \begin{figure}[t]
    \centering
     \includegraphics[width=1\columnwidth]{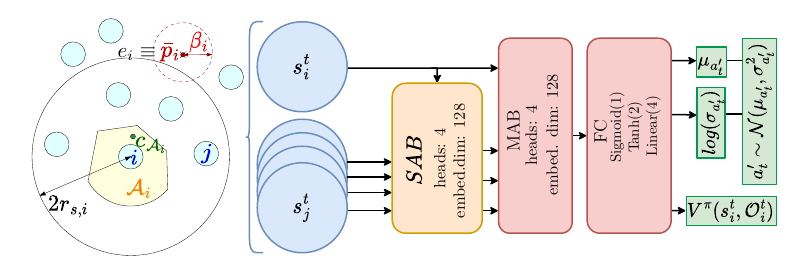}
    \caption{Overview of the learned policy network. The observed robots are encoded through a self-attention block  (SAB) and decoded with a multi-headed attention block (MAB) and three fully connected layers (FC)  
    that take $s_i$ as query. The resulting vector is mapped to the parameters of the diagonal Gaussian distribution, $\mathcal{N}(\mu_{a'_t},\sigma^2_{a'_t})$, and value function estimate, $V^{\pi}(s^t_i,\mathcal{O}^t_i)$.%
    }
    \label{fig:architecture}
\end{figure}
We employ the Proximal Policy Optimization (PPO) for training our neural network~\cite{schulmanPPO} with parameter-sharing. In our implementation~\cite{liang2018rllib}, we utilize a combination of the surrogate loss and the KL-divergence term to ensure training stability. Additionally, an entropy regularization term is integrated to foster exploration, as outlined in \cite{Haarnoja17}. For a deeper dive into the algorithm's details, we direct readers to~\cite{schulmanPPO}.





\subsubsection{Further considerations}
To guide and stabilize learned policy, we introduce a number of modifications both during training and testing. Firstly, to 
smooth our trajectories, we estimate $a_i^t$ by running an unbiased exponential recency-weighted average \cite{sutton2018reinforcement}:
\begin{equation}
    \begin{aligned}
    &a_i^{t+1} = 
    \frac{\alpha}{o^t}a'_i 
    + (1-\frac{\alpha}{o^t})a^t_i \\
    &o^{t+1} = o^t + \alpha(1-o^t)\text{, with $o^0=0$,}
    \end{aligned}
\end{equation}
where $\alpha$ is a hyperparameter that modulates how reactive is the policy to the output of the learned network. 
Secondly, we also assume that pushing effect (see Theorem~\ref{pr:conv}) is only necessary when the robot has not completed its tasks. Therefore, the output $\beta'_i$ is clipped to the lower values $[0.1,0.28]$ if $l_i = 1$, and it is clipped to the higher values $[0.32,0.5]$ otherwise. 
These values are included in the range $\beta_{\min}$ and $\beta^D$. A small gap between the lower values and the higher values is introduced in order to be sure to induce the pushing effect. 
This allows the robot to be more aggressive when the goal has not been achieved, and let other robots pass while it is close to its goal. We also substitute $\bar{p}'_i$ by the goal position $e_i$ when $l_i = 1$ to ensure stationary behavior when robots are at their goal, making the pushing effect easier to learn.
We refer to the overall method as Learning Lloyd-based method (LLB).
In the simulation results in Sec.~\ref{sec:Simulation results}, we show the advantages of LLB with respect to the pure learning method without the Lloyd-based support.




\subsection{Dynamic constraints}

Until now we only discussed the case of holonomic robots without constraints, however it is possible to implement the proposed algorithm also on unicycle, car-like robots, omnidirectional robots and aerial robots. 

In fact, as we discussed, the holonomic robot has just to pursue a point position (i.e., the centroid $c_{\mathcal{A}_i}$). Similarly to~\cite{zhu2022decentralized}, we use an MPC formulation to generate control inputs for the robots, which are compliant with the robot dynamics and that at each iteration generates a trajectory that remains inside the cell $\mathcal{A}_i$. In this way we can still guarantee safety at every time step. 
Each robot $i$ solves the following MPC problem (we omit the $i$ indexing to simplify the notation),
\begin{subequations}
\label{eq: MPC problem_formulation}
\begin{align}\label{eq:cost_generic}
\min_{{x}_k,{u}_k}& \sum^{N_t}_{k=0} J_k({x}_k,{u}_k, c_{\mathcal{A}}),\\
\label{eq:MPC dyn_constraints_generic}\textrm{s.t.: } & {x}_{k} = f({x}_{k-1},{u}_{k-1}) ,\\
\label{eq:MPC bounds_generic}&\,{x}_k\in \mathcal X, \, {u}_k\in \mathcal U,\\
 \label{eq: MPC voronoi cell constraint}
 & p_k \in \mathcal{A},\\
 \label{eq:init_generic}
 &{x}_0 = {x}_{\textrm{init}},
\end{align}
\end{subequations}
where $k$ indicates the time index, $N_t$ is the number of time instants in the MPC problem, $J$ is the cost function, ${x}_{k}$ and ${u}_{k}$ are the state and inputs respectively, $f({x}_{k-1},{u}_{k-1})$ represents the robot dynamical constraints,  ${x}_k\in \mathcal X, \, {u}_k\in \mathcal U$ are the state and input constraints respectively, and ${x}_0 = {x}_{\textrm{init}}$ is the initial condition. The cost $J$ can be chosen arbitrary as far as it incentives the tracking of the desired velocity. For each time instant $k$ a possible formulation can be the following:
\begin{equation}\label{eq:Jmpc}\small
    J_k = \left(v_k-v^D\frac{c_{\mathcal{A}} - p_k}{\|c_{\mathcal{A}} - p_k\|}\right)^\top \left(v_k-v^D\frac{c_{\mathcal{A}} - p_k}{\|c_{\mathcal{A}} - p_k\|}\right)  + {u}_k^{\top} Q {u}_k,
\end{equation}
where ${v}_k$ is the robot's velocity in the $x$-$y$ plane, $v^D$ is the target absolute velocity that may be defined as a function of the distance from the centroid, i.e., $v^D = v^D(\|c_{\mathcal{A}} - p_k\|)$ and $Q$ is a positive definite matrix of weights.

It is important to note that since we have now introduced potentially complex robot dynamics~\eqref{eq:MPC dyn_constraints_generic} as well as actuator and state constraints~\eqref{eq:MPC bounds_generic}, we do not guarantee the recursive feasibility of the MPC problem~\eqref{eq: MPC problem_formulation}. In fact, the problem can be unfeasible if for example, the sensing range $r_{s,i}$ is too small and the robots have quite limited braking capability (in this case the constraints \eqref{eq: MPC voronoi cell constraint} can be violated, compromising the safety of the system). However, in practice, for adequate sensing range and target velocity values, this is not an issue as is shown in Sec.~\ref{sec:Experimental results}.

\section{Simulation results}
\label{sec:Simulation results}

The proposed approach is tested in simulation in several scenarios. We also provide a comparison with state of the art approaches. Through the numerical simulations we aim to further verify the scalability, the robustness, and the flexibility of the proposed approach. The multimedia material that accompanies the paper provides some videos of the simulation results.
The source code can be found at \texttt{https://github.com/manuelboldrer/RBL}.

The parameters are selected as follows: $r_{s,i} = 1.5$~(m). Because of
numerical issues due to the fact that the centroid position is not computed
analytically but numerically, we also set a minimum value for $\beta_{i,\min} =
0.1$. Notice that the values for the parameters $d_1=d_3, d_2 = d_4$ have to
depend on $r_{s,i}$ and $\beta_i^D$ for a correct convergence (see
Theorem~\ref{pr:conv}). For the simulations we used $d_1 = d_3 = 0.1$ and $d_2 =
d_4 =  d_{p_i,c_{\mathcal{S}_i}}-d_1$ unless specified
otherwise. Notice that, even if the chosen parameters may not always satisfy the
sufficient conditions for convergence (depending on $\delta_i$), we always
obtained convergence from the simulation  results. It indicates that the
sufficient conditions provided for convergence are conservative, at least for
scenarios that are not overly crowded (e.g., with $\eta < 0.1$). \looseness=-1

The performance metrics that we considered include the maximum travel time (max time), which measures how much it takes to all the robots to converge towards their goal positions, the average speed of each robot during the mission (for each robot the mission stops when it reaches its goal position), and the robots success rate (RSR), computed as the ratio of robots that safely reached the goal to the total number of robots. 
The simulation results section is organized as follows: 1. we provide simulation results of RBL algorithm in crossing circle scenario, half crossing circle scenario and random room scenario. 2. We compared the proposed LLB approach with a pure learning algorithm without the Lloyd support and also with RBL. The comparison was done in multiple scenarios by repeating each experiment $20$ times to have a statistical evidence. 3. We provide simulation results of RBL algorithm by testing different dynamical constraints, i.e., unicycle and car-like. 4. To test the robustness and reliability of RBL we added randomness in the selection of robot encumbrance $\delta_i$, $\beta^D$ and $k_{p,i}$ and we run $100$ simulations by accounting also different number of robots in the scene. 5. We provide a comparison with the state of the art. In particular we compared RBL by considering the unicycle case with LB~\cite{boldrer2020lloyd}  and RL-RVO~\cite{han2022reinforcement}, which performs better than SARL~\cite{chen2019crowd}, GA3C-CADRL~\cite{everett2018motion} and NH-ORCA~\cite{alonso2013optimal}. Then we compared RBL in the holonomic case with RVO2~\cite{van2011rvo2}, SBC~\cite{wang2017safety}, GCBF+~\cite{zhang2024gcbf+},
BVC~\cite{zhou2017fast}, DMPC~\cite{luis2020online} and LCS~\cite{park2022online}. 5. Finally, we provided some insights on the computational cost of the proposed algorithm.
\subsection{Crossing circle (RBL)}
We report several simulations in the crossing circle scenario. In Fig.~\ref{fig:crosscircle} we depicted the trajectories obtained by considering $N = [5,10,25,50,300]$ homogeneous holonomic robots. The radius of the circle is $R_c = [10,10,10,10,15]$~(m). The value of $\beta_i^D = 0.5$, $\forall i = 1,\dots, N$. The robot encumbrance $\delta_i = [0.35,0.35,0.35,0.35,0.1]$~(m), and  $k_{p,i} = 6$, $\forall i = 1,\dots, N$.

 \begin{figure*}[t]
  \centering
  \setlength{\tabcolsep}{0.05em}
  \begin{tabular}{ccccc}
    \includegraphics[width=.4\columnwidth]{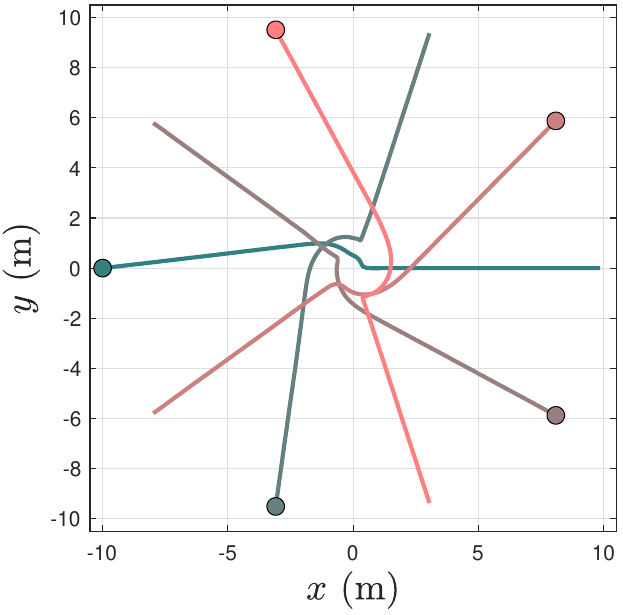} &
  \includegraphics[width=.4\columnwidth]{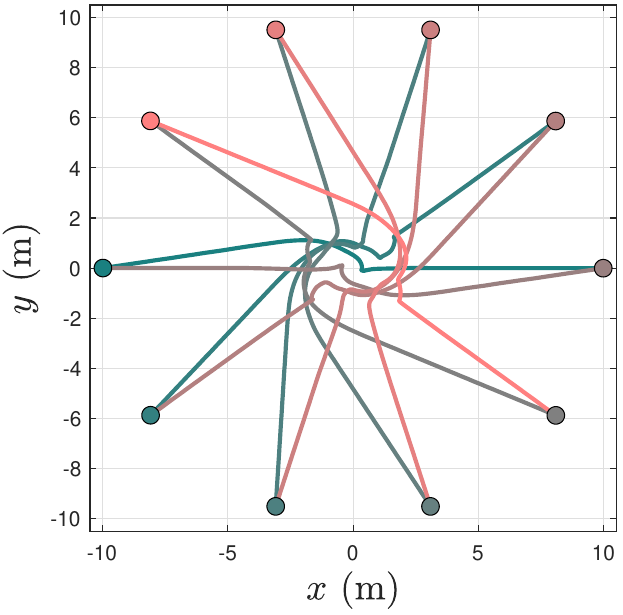}   &
   \includegraphics[width=.4\columnwidth]{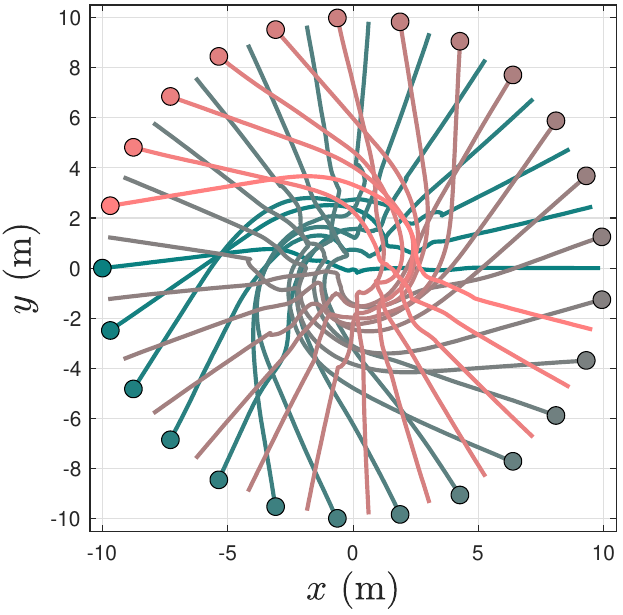} &
  \includegraphics[width=.4\columnwidth]{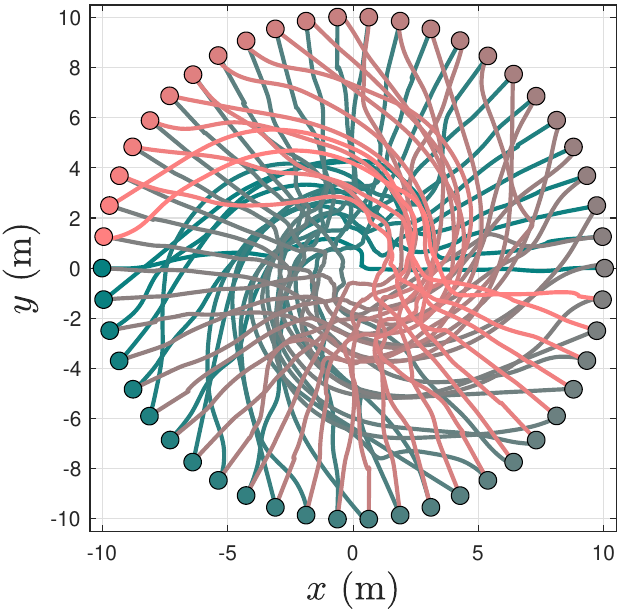}  
  \includegraphics[width=.4\columnwidth]{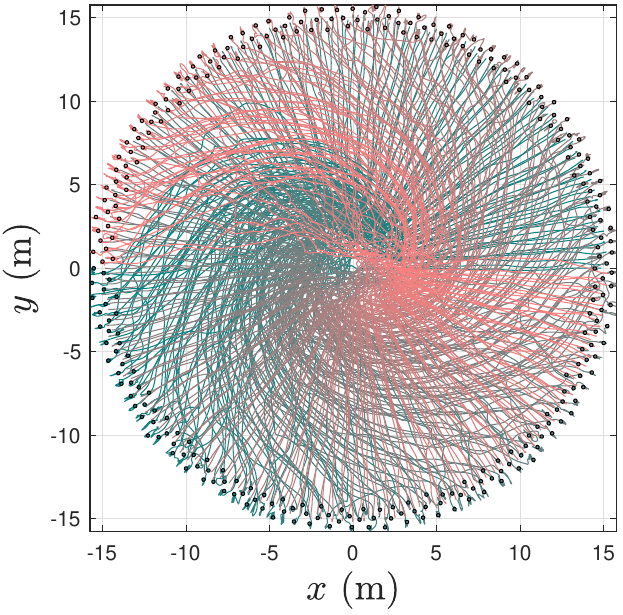}
    \end{tabular}
    \caption{Circle crossing: simulation results with $N= [5,10,25,50,300]$ holonomic robots. The value $\beta^D=0.5$ is the same for all the robots. }
  \label{fig:crosscircle}
\end{figure*}

\begin{table}[t]
    \centering
    \renewcommand{\tabcolsep}{0.1cm} 
    \begin{tabular}{|c|c|c|c|c|} \hline
        N   &  $\eta$ & max time (s) & mean speed (m/s) & succ. rate \\ \hline
        5   & 0.0061 & 5.18 & 3.96 & 1.00\\ \hline
        10  & 0.0122  & 5.91 & 3.73 &1.00 \\ \hline
        25  & 0.0306   & 7.98 & 2.91 &1.00 \\ \hline
        50  & 0.0612  & 11.09 & 2.40 &1.00 \\ \hline
        300 &  0.0133  & 30.76 & 1.52 &1.00 \\ \hline            
    \end{tabular}
     \vspace{10pt}
    \caption{Circle crossing: quantitative data form the simulation in Figure~\ref{fig:crosscircle}.}
    \label{tab:cc}
\end{table}

In Table~\ref{tab:cc} we report the quantitative results, where $N$ is the number of robots and $\eta = \frac{\text{Area}_\text{robots}} {\text{Area}_\text{tot}}$ is what we called the \emph{crowdness factor}, which is the ratio between the sum of the area occupied by the robots 
and the total mission space area, it is an indicator of the challenging and intricate nature of the considered scenario.

In Figure~\ref{fig:simul1} we show other two simulations, where we picked a
random encumbrance radius for each robot $\delta_i \in [0.1,0.5]$. On the left
hand side we selected $\beta^D_i =0.5$, $\forall i = 1,\dots,N$, while on the
simulation on the right hand side we selected $\beta^D_i \in [0.2,1.5]$ randomly
for each robot. Figure~\ref{fig:rho} shows the effects of the choice of
$\beta^D$ on the travel time for each robot, as we expected, by increasing
$\beta^D$ the travel time increases as well.

\begin{figure}[t]
  \centering
    \setlength{\tabcolsep}{0.05em}
  \begin{tabular}{cc}
    \includegraphics[width=.5\columnwidth]{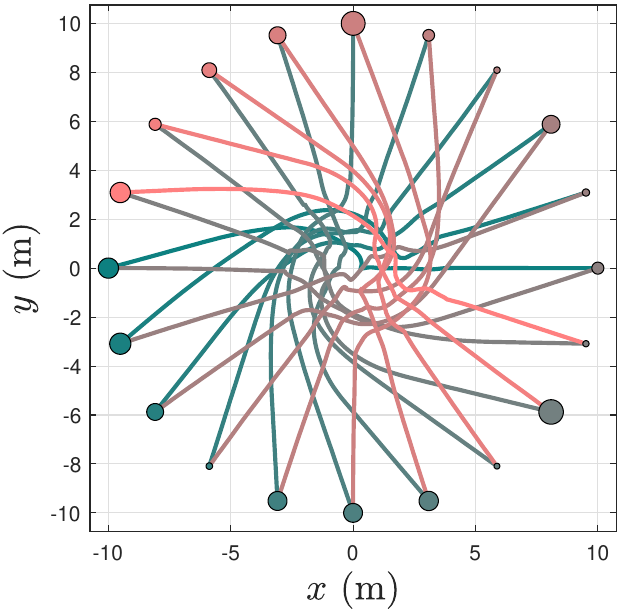} &
  \includegraphics[width=.5\columnwidth]{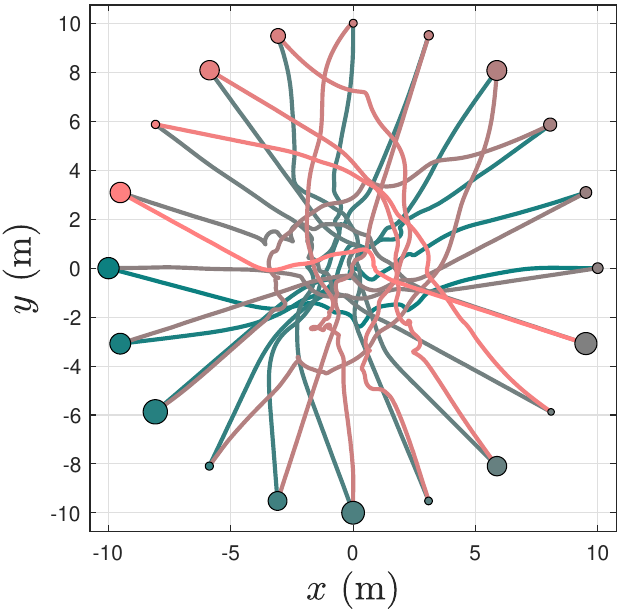}   \\
  \end{tabular}
    \caption{Circle crossing: simulation results with $N=20$ holonomic robots with random encumbrance $r$. On the left hand side $\beta^D_i = 0.5$ for all the robots, On the right hand side $\beta^D_i$ is random and different from robot to robot.  }
  \label{fig:simul1}
\end{figure}

\begin{figure}[t]
    \centering
    \includegraphics[width=1\columnwidth]{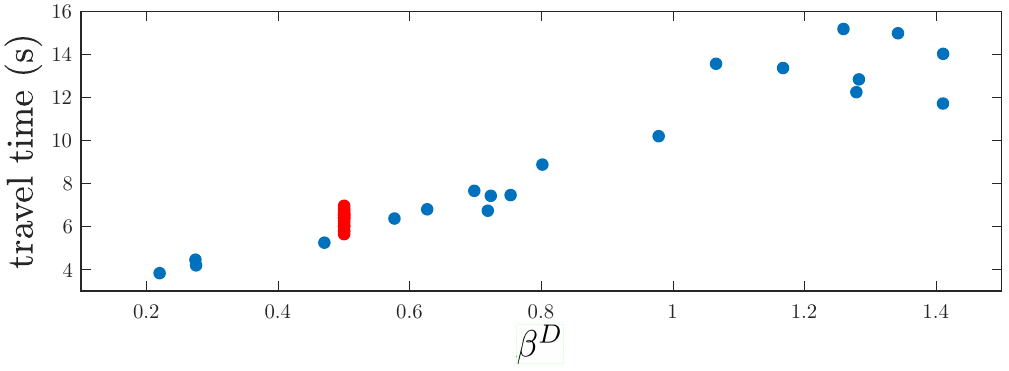}
    \caption{Dependence of the travel time with respect to $\beta^D_i$. Each dot represents the performance of one robot. In red the left hand side simulation in Figure~\ref{fig:simul1}, in blue the right hand side simulation in Figure~\ref{fig:simul1}. }
    \label{fig:rho}
\end{figure}

\subsection{Half crossing circle (RBL)}
  Similarly to the crossing circle case, we report some simulations for the half crossing circle scenario. In this case, the goal position for each robot is shifted by an angle of $\pi + \gamma$, rather than being exactly out of phase by $\pi$. In Figure~\ref{fig:halfcrosscircle}, we show that even if the robots follow the right hand side rule according to~\eqref{eq:wp}, our algorithm can also manage this challenging situations, where the emergent behavior is a clockwise rotation. In Table~\ref{tab:hcc} we report the quantitative results. 
\begin{figure*}[t]
  \centering
  \setlength{\tabcolsep}{0.05em}
  \begin{tabular}{ccccc}
    \includegraphics[width=.4\columnwidth]{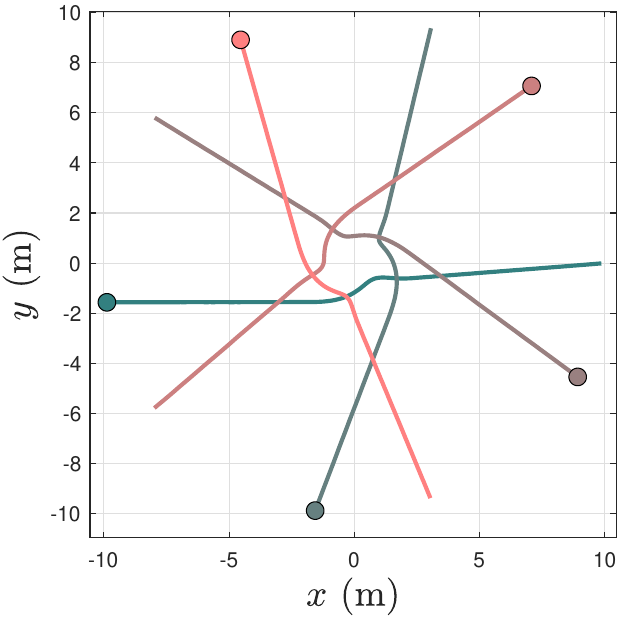}   &
    \includegraphics[width=.4\columnwidth]{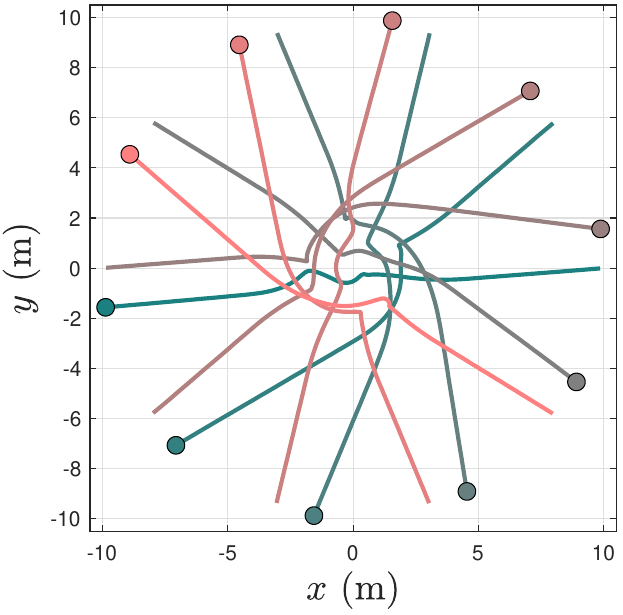}  &
    \includegraphics[width=.4\columnwidth]{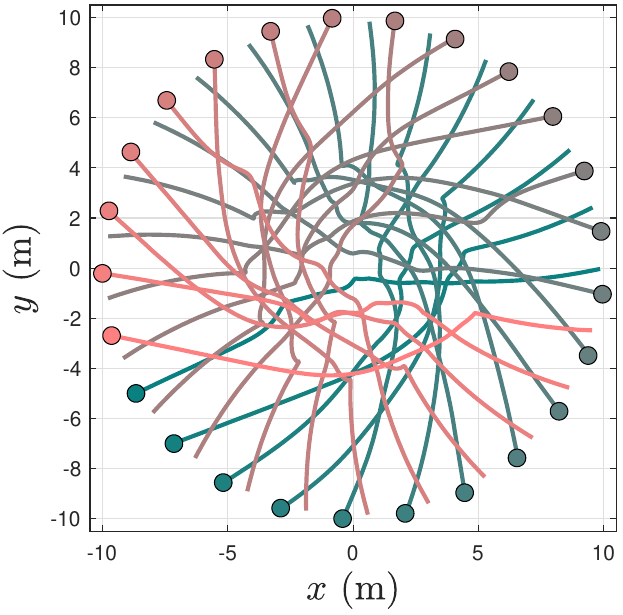}  &
    \includegraphics[width=.4\columnwidth]{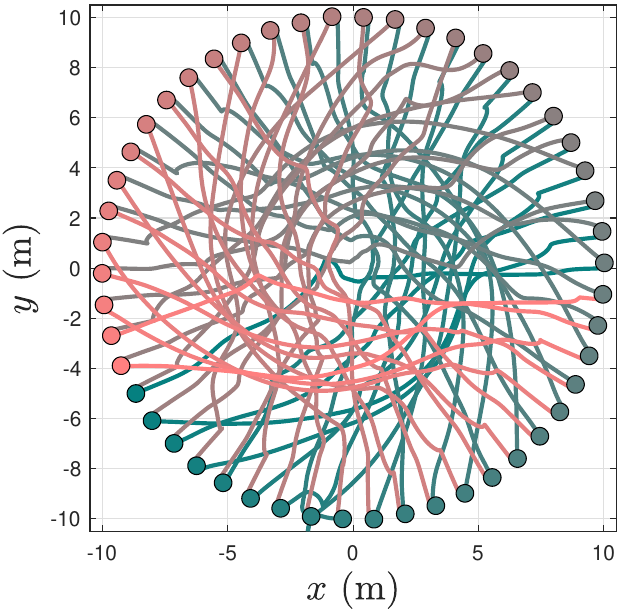}  &
    \includegraphics[width=.4\columnwidth]{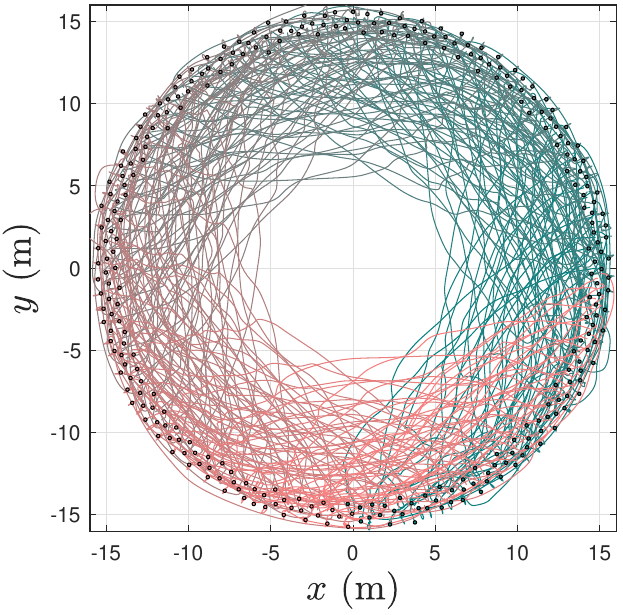} 
    \end{tabular}
    \caption{Half circle crossing: simulation results with $N= [5,10,25,50,300]$, holonomic robots. The value $\beta^D=0.5$ is the same for all the robots. The goal position is rotated with respect to the crossing scenario of a value $\gamma =[\pi/20,\pi/20,\pi/6,\pi/6, \pi/2]$. }
  \label{fig:halfcrosscircle}
\end{figure*}

\begin{table}[t]
\renewcommand{\tabcolsep}{0.1cm} 
    \centering
    \begin{tabular}{|c|c|c|c|c|} \hline
        N   &  $\eta$ & max time (s) & mean speed (m/s) & succ. rate \\ \hline
        5  & 0.0061  & 5.05&  3.95& 1.00\\ \hline
        10  & 0.0122  & 5.44&  3.77&1.00\\ \hline
        25  & 0.0306   & 6.47&  3.43&1.00\\ \hline
        50  & 0.0612  & 7.01&  2.76&1.00\\ \hline
        300 &  0.0133  & 16.59& 1.91 &1.00\\ \hline            
    \end{tabular}
     \vspace{10pt}
    \caption{Half circle crossing: quantitative data form the simulation in Fig.~\ref{fig:halfcrosscircle}.}
    \label{tab:hcc}
\end{table}

\subsection{Random room (RBL)}
In this scenario we report two simulations. The initial location and the final goal location are selected randomly. In Figure~\ref{fig:random}, on the left hand side,  $\eta_1 = 0.157$ and $N=20$ homogeneous robots, on the right hand side $\eta_2 = 0.135$ and $N=300$ heterogeneous (random size) robots. In both cases the robots successfully converge to a neighborhood of the goal positions.
\begin{figure}[t]
  \centering
  \renewcommand{\tabcolsep}{0.01cm} 
  \begin{tabular}{cc}
    \includegraphics[width=.49\columnwidth]{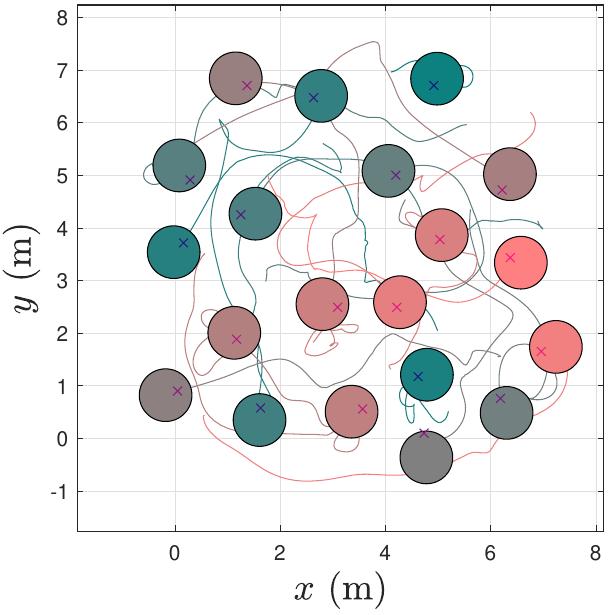} &
  \includegraphics[width=.5\columnwidth]{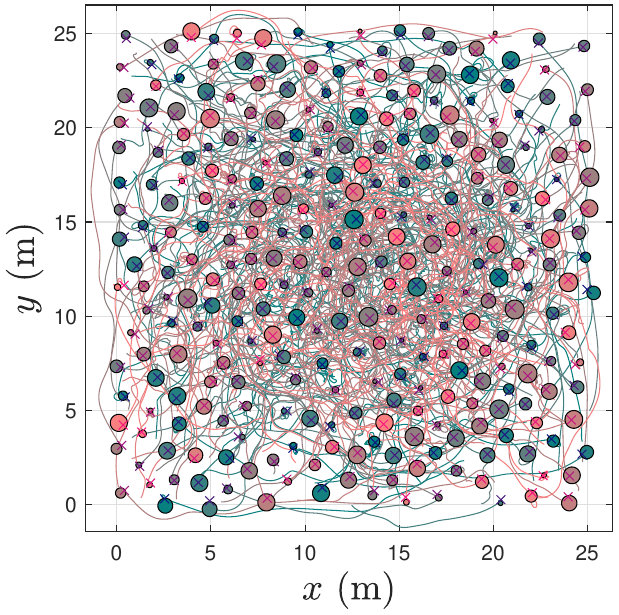} 
  \end{tabular}
    \caption{(left) Room 7x7 (m$^2$): simulation results with $N=20$ holonomic robots with crowdness factor $\eta = 0.157$. (right) Room 25x25 (m$^2$): simulation results with $N=300$ holonomic robots with crowdness fator $\eta = 0.135 $ and random encumbrance $\delta$.}
  \label{fig:random}
\end{figure}

\subsection{Learning approach (LLB)}

We implemented the Learning Lloyd-based (LLB) algorithm introduced in Section~\ref{sec:learning}, to verify the effectiveness of using a Lloyd-based support on learning-based algorithms. In Table~\ref{tab:learn} we compared the same learning algorithm, with the Lloyd support (LLB), without the support of the LB algorithm (we indicated it in Table~\ref{tab:learn} as Pure Learning algorithm), and also with the RBL algorithm.


We made few changes to train the learning approach without Lloyd support. In fact in this case, we only learn a point relative to the robot $\Delta\tilde{p}_i \in \mathcal{B}(0,r_{s,i})$, to follow with velocity  $\dot{p}_i = v_{i,\text{max}}\frac{\Delta\tilde{p}_i}{r_{s,i}}$. Since collisions under this policy are possible, additional reward signal $r_{\text{coll}}$ is also added to the reward function to punish collision events. All other conditions have been maintained. 
We observed that the addition of LB during training simplifies the problem. Pure learning method requires trading off collision avoidance with fast goal convergence. Instead, adding the safety LB layer during training allows to only consider fast convergence towards the goal. 
We run multiple simulations in different scenarios. It results that the LLB was able to achieve a success rate of $1.00$, while pure learning presents failures even in simple scenarios (e.g., with $N=5$). With respect to RBL, the LLB performs better in symmetric scenarios (i.e., circle scenarios) and slightly  worse in the random room scenarios. In the case of symmetric scenarios this is due to the fact that RBL needs some time to break perfect symmetries by engaging the right hand rules (the transient time depends on the parameters settings), while LLB is able to break symmetries immediately as can be seen from Figure~\ref{fig:circleRL}.  In the figure we depict the trajectories obtained from LLB in the crossing circle scenarios reported also in Table~\ref{tab:learn}.

\begin{figure}
  \setlength{\tabcolsep}{0.05em}
    \centering
     \begin{tabular}{ccc}
    \includegraphics[width=.33\columnwidth]{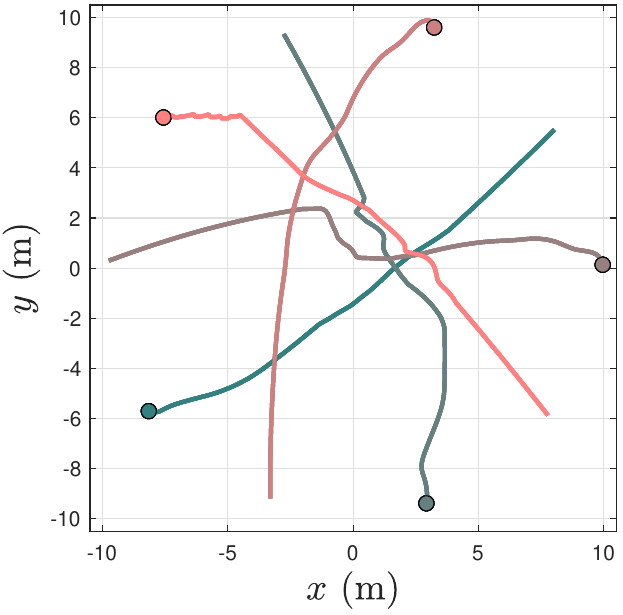} &
    \includegraphics[width=.33\columnwidth]{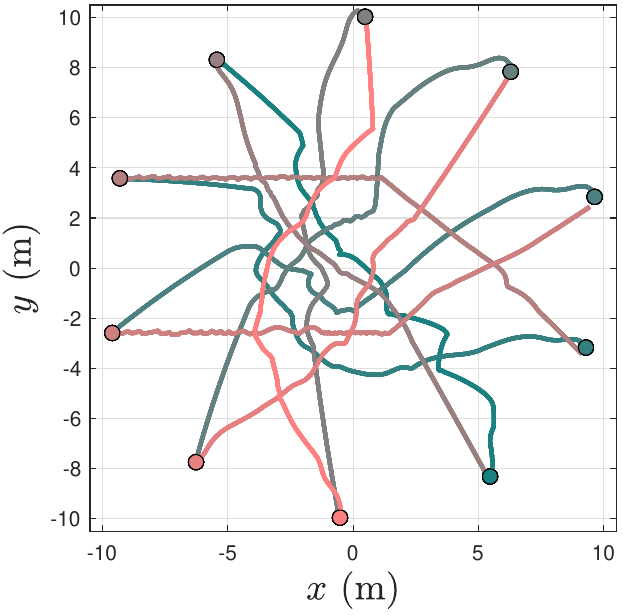}&
    \includegraphics[width=.33\columnwidth]{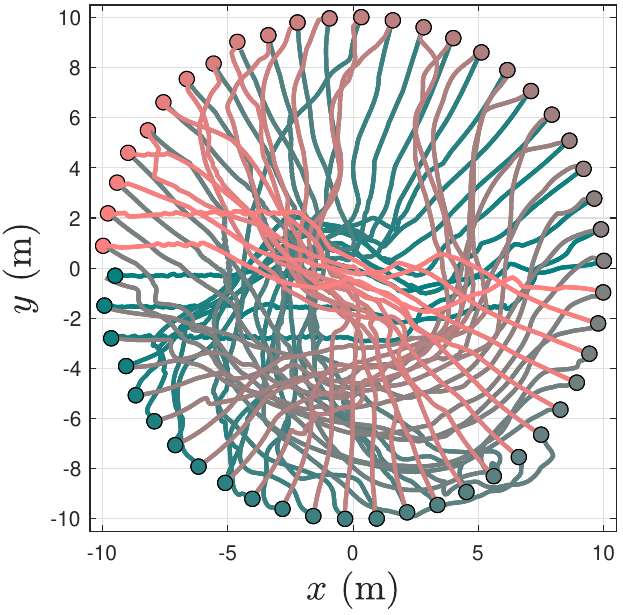} \\
    \end{tabular}
    \caption{Crossing circle: simulation results with $N= [5,10,50]$ holonomic robots, by using LLB approach. }
    \label{fig:circleRL}
\end{figure}

 \begin{table}
 \renewcommand{\tabcolsep}{0.1cm} 
    \centering
    \begin{tabular}{|c|c|c|c|c|}
    \hline 

RBL & $N$ &  $\eta$ & max time (s) & succ. rate \\ \hline
   circle ($10$ m) & 5 & 0.0045  & 6.56  & 1.00 \\ \hline
   room ($9\times 9$ m$^2$)& 5 &0.0289& 2.25  & 1.00 \\ \hline
   circle ($10$ m) & 10 & 0.0090   &  7.78 & 1.00 \\ \hline
    room ($9\times 9$ m$^2$)& 10 & 0.0349& 2.74 & 1.00 \\ \hline
    circle ($10$ m) & 50 &  0.0450 & 17.65 & 1.00 \\ \hline
    room ($15\times 15$ m$^2$) & 50 & 0.0628  & 12.36 & 1.00 \\ \hline \hline
    Pure  Learning & $N$ &  $\eta$ & max time (s) & succ. rate \\ \hline
   circle ($10$ m) & 5 & 0.0045  & 5.25  & 0.94  \\ \hline
   room ($9\times 9$ m$^2$)& 5 &0.0289& 2.79  & 0.56 \\ \hline
   circle ($10$ m) & 10 & 0.0090   &  6.38 & 0.77 \\ \hline
    room ($9\times 9$ m$^2$)& 10 & 0.0349& 3.31& 0.04 \\ \hline
    circle ($10$ m) & 50 &  0.0450 & - & 0.00 \\ \hline
    room ($15\times 15$ m$^2$) & 50 & 0.0628  & - & 0.00 \\ \hline \hline
     LLB & $N$ & $\eta$ & max time (s) & succ. rate \\ \hline
   circle ($10$ m) & 5 & 0.0045  &  5.47   & 1.00\\ \hline
   room ($9\times 9$ m$^2$) & 5 &0.0289  &  2.42  &1.00\\ \hline
   circle ($10$ m) & 10 & 0.0090 & 6.43 &1.00\\ \hline
    room ($9\times 9$ m$^2$)& 10 & 0.0349 &  3.37  &1.00\\ \hline
    circle ($10$ m) & 50 & 0.0450 &  13.35  &1.00\\ \hline
    room ($15\times 15$ m$^2$) & 50 & 0.0628  &  13.67  &1.00\\ \hline

    \end{tabular}
    \vspace{10pt}
    \caption{Simulation results for rule-based Lloyd (RBL), Pure Learning and Learning with Lloyd support (LLB). We considered $N=[5,10,50]$ holonomic robots in the crossing circle scenario and in the random room scenario, with robot encumbrance $\delta_i = 0.3$~(m). The maximum time is an average of the successful simulations. 
    }
    \label{tab:learn}
\end{table}  


\subsection{Dynamical constraints (RBL)}
We also tested our algorithm with different dynamical models, i.e., the unicycle and the car-like. In Figure~\ref{fig:circleNH} we depicted the results in the crossing circle scenario for $N  = [5,10,25]$, for the unicycle model (top) and for the car-like model (bottom). In both cases we set a maximum and desired forward velocity $v^D = 1.5$~(m/s). The quantitative results are reported in Tables~\ref{tab:unicc} and~\ref{tab:carcc}. 

\begin{figure}
  \setlength{\tabcolsep}{0.05em}
    \centering
     \begin{tabular}{ccc}
    \includegraphics[width=.33\columnwidth]{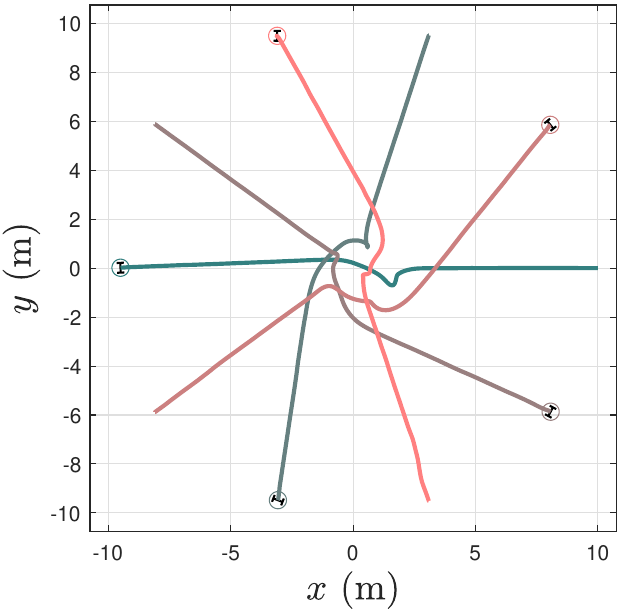} &
    \includegraphics[width=.33\columnwidth]{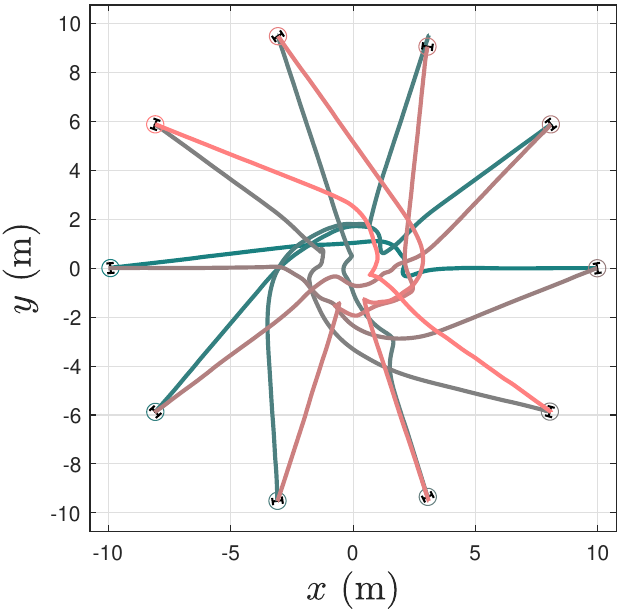}&
    \includegraphics[width=.33\columnwidth]{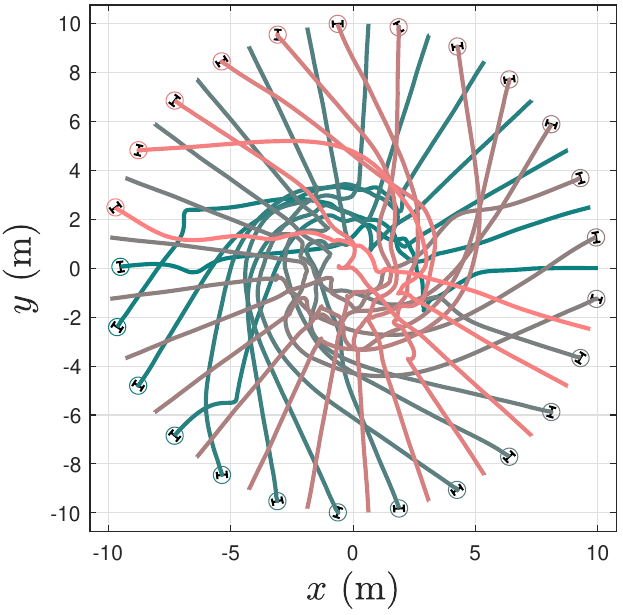} \\
             \includegraphics[width=.33\columnwidth]{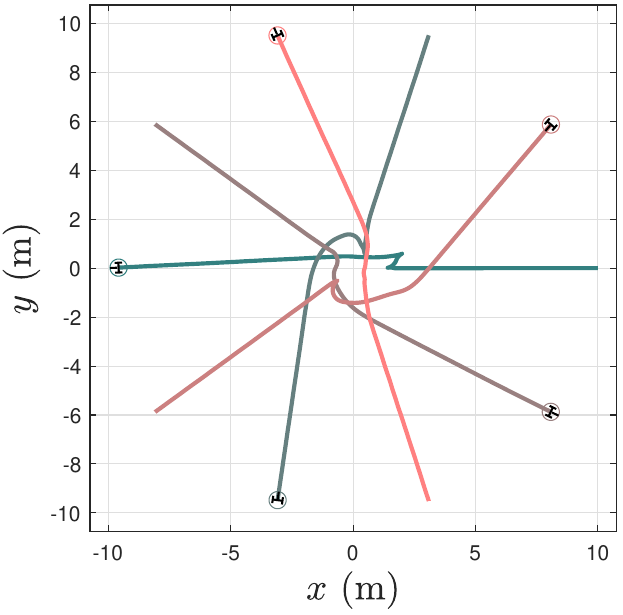} &
    \includegraphics[width=.33\columnwidth]{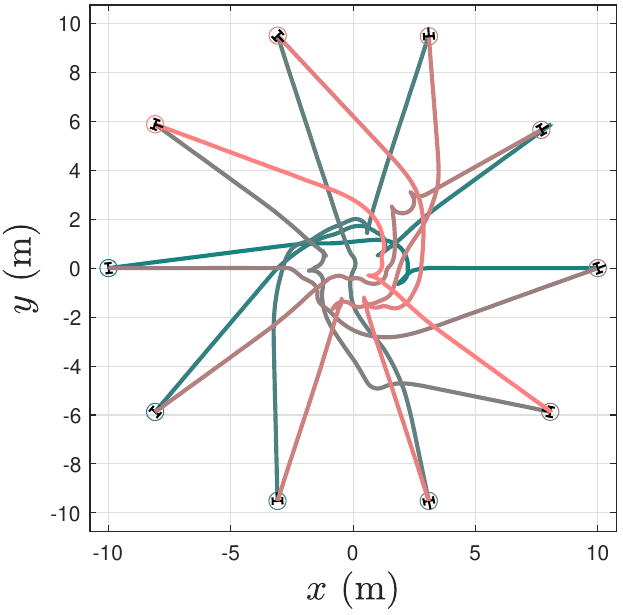}&
    \includegraphics[width=.33\columnwidth]{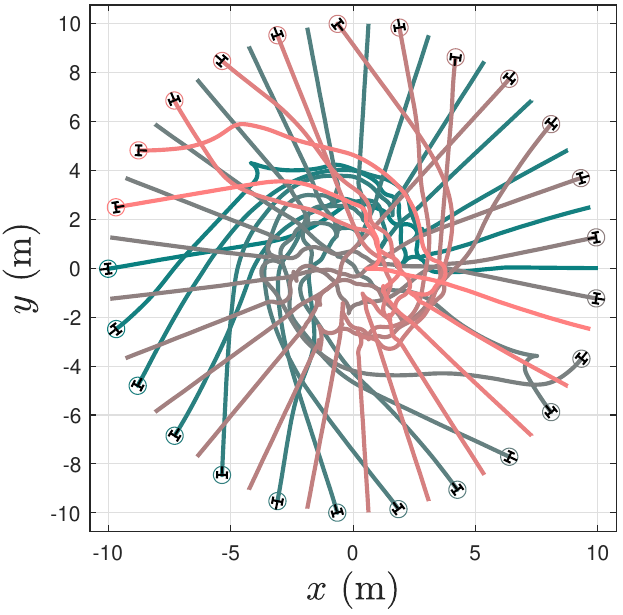} 
    \end{tabular}
    \caption{Crossing circle: simulation results with $N= [5,10,25]$ non-holonomic robots. At the top the unicycle model, at the bottom the car-like model. }
    \label{fig:circleNH}
\end{figure}

\begin{table}[t]
    \centering
    \renewcommand{\tabcolsep}{0.1cm} 
    \begin{tabular}{|c|c|c|c|c|} \hline
        N   &  $\eta$ & max time (s) & mean speed (m/s) & succ. rate \\ \hline
        5  & 0.006125& 21.50 & 1.09  & 1.00\\ \hline
        10  & 0.01225& 26.60 & 1.01 & 1.00 \\ \hline
        25  & 0.0306& 32.90 & 0.82 & 1.00 \\ \hline           
    \end{tabular}
     \vspace{10pt}
    \caption{Circle crossing unicycles: quantitative data from the simulation in Fig.~\ref{fig:circleNH} (top).}
    \label{tab:unicc}
\end{table}

\begin{table}[t]
    \centering
    \renewcommand{\tabcolsep}{0.1cm} 
    \begin{tabular}{|c|c|c|c|c|} \hline
        N   &  $\eta$ & max time (s) & mean speed (m/s) & succ. rate \\ \hline
        5   & 0.006125 & 26.90 & 0.86  & 1.00\\ \hline
        10  & 0.01225 & 32.00 & 0.80 &1.00 \\ \hline
        25  & 0.0306 & 44.40 & 0.64 &1.00 \\ \hline     
    \end{tabular}
     \vspace{10pt}
    \caption{Circle crossing car-like: quantitative data from the simulation in Fig.~\ref{fig:circleNH} (bottom). }
    \label{tab:carcc}
\end{table}



\subsection{Robustness and reliability (RBL)}

For the crossing circle and random room in the holonomic case we run additional simulations to test the robustness of the algorithm to the change in parameters. According to the data reported in~\cite{fan2020distributed}~Sec.~$5.4$, it is clear that in learning-based solutions performance, safety and convergence degrade by changing parameters such as the dimension of the robots and the desired velocity.
In the following we show an analysis of robustness by considering a heterogeneous scenario and by selecting randomly the encumbrances of the robots $\delta_i \in [0.1,0.5]$, the value of $\beta^D_i \in [0.2,0.75]$ and $k_{p,i} \in [3,6]$. We run $100$ simulations with $N = [20,40,100]$ robots. The simulation results are reported in Table~\ref{tab:statistics}, the success rate is always equal to $1.00$.

\begin{table}
    \centering
    \footnotesize
    \renewcommand{\tabcolsep}{0.05cm} 
    \begin{tabular}{|c|c|c|c|c|c|}
    \hline 
  RBL  & $N$ & mean of $\eta$ & max time (s) & mean speed (m/s)& RSR \\ \hline
   circle ($4$ m) & 20 & 0.112 & 8.87$\pm$ 0.98& 1.70 $\pm$ 0.10& 1.00 \\ \hline
   room ($7\times 7$ m$^2$)& 20 & 0.115 & 5.33 $\pm$ 1.13  & 1.79 $\pm$ 0.25 &1.00 \\ \hline
   circle ($7$ m) & 40 & 0.073 & 12.96 $\pm$ 0.84 & 1.80 $\pm$ 0.11 & 1.00 \\ \hline
    room ($9\times 9$ m$^2$)& 40 & 0.139 & 10.24 $\pm$ 2.16 & 1.46 $\pm$ 0.07&1.00 \\ \hline
    circle ($16$ m) & 100 & 0.0352 & 24.57 $\pm$ 1.15& 2.03 $\pm$ 0.10&1.00 \\ \hline
    room ($15\times 15$ m$^2$) & 100 & 0.125 &21.83 $\pm$ 4.89& 1.46 $\pm$ 0.07 &1.00 \\ \hline 
  
    \end{tabular}
    \vspace{10pt}
    \caption{Simulation results for $N=[20,40,100]$ robots in the crossing circle scenario and in the random room scenario, with $\delta_i \in [0.1,0.5]$~(m), $\beta^D_i \in [0.2,0.75]$ and $k_{p,i} \in [3,6]$. }
    \label{tab:statistics}
\end{table}

\subsection{Comparison with state of the art}
As we already mentioned, our main advantages  with respect to the algorithms proposed in the literature reside in the combination of four aspects:

1. We can achieve success rate of $1.00$ in very crowded scenarios. 
2. Each robot needs to know only its position $p_i$, its encumbrance $\delta_i$, its goal position $e_i$, its neighbours' positions $p_j$ and encumbrance $\delta_j$ (if $\| p_i - p_j \| \leq  2 r_{s,i}$, then $j \in \mathcal{N}_i$);
3. The control inputs for each robot can be generated in an asynchronous fashion without affecting safety and convergence.
4. We can account for heterogeneous conditions, i.e., different robots' encumbrances and different robots' dynamics and velocities; 

The combination of these features are not provided by any other algorithm in the literature. In the following, we provide a comparison with some state of the art algorithms.

Firstly, we considered the non-holonomic (unicycle) case, we compared the performance of our rule-based Lloyd algorithm (RBL) with LB~\cite{boldrer2020lloyd} and RL-RVO~\cite{han2022reinforcement}, which demonstrates superior performance and success rate compared to other algorithms such as SARL~\cite{chen2019crowd}, GA3C-CADRL~\cite{everett2018motion}, NH-ORCA~\cite{alonso2013optimal}.

We considered the crossing circle scenario $R_c =4$~(m) with $N =
[6,20,30,100]$, $\delta_i = 0.2$~(m) ($\delta_i = 0.1$~(m) for $N=100$), a
maximum forward velocity $v_{i,\max} =1.5$~(m/s), and limits on the
accelerations of $a_{i,\max} =1$~(m/s$^2$). As a result (see
Table~\ref{tab:sota}), RBL executes better in terms of success rate with respect
to the other algorithms (in this case the success rate is the ratio between the
number of missions where the robots converged safely and the total number of
missions simulated). On the other hand, when it succeeds, RL-RVO has on average
better performance. This is also due to the fact that RL-RVO relies on stronger
assumptions (each robot has to collect unobservable information).

We also propose a comparison with other \textit{reactive} algorithms for the
holonomic robots case, such as RVO2~\cite{van2011rvo2},
SBC~\cite{wang2017safety}, GCBF+~\cite{zhang2024gcbf+} and also with recent
\textit{predictive planning} methods such as
BVC~\cite{zhou2017fast,abdullhak2021deadlock}, DMPC~\cite{luis2020online} and
LCS~\cite{park2022online}. Even if some of these algorithms may have better
performance in terms of time to reach the goal in certain scenarios, they lack
of guarantees of live-lock and deadlock avoidance, hence, even in simple
scenarios they may fail the mission. In~\cite{park2022online} (Sec. IV, Fig. 5)
it is clear that the success rate of the mission decreases considerably for
$N>50$, as well for BVC, DMPC and LCS. Also RVO2~\cite{van2011rvo2} presents
deadlock issues in symmetric conditions (crossing circle) and in crowded random
room scenarios with high crowdness factor, e.g., we tested a random room
scenario with $\eta = 0.452$, it results in a SR~$= 0.65$. For the SBC
approach~\cite{wang2017safety}, in~Sec. VII, it is clearly stated that the
algorithm suffers of what they called type III deadlock and live-lock as well.
We tested it in a scenario with $\eta = 0.452$, it results in SR~$= 0.00$. We
obtain the same SR~$= 0.00$ for more advanced variants such as
GCBF+~\cite{zhang2024gcbf+}. While in the same scenario our RBL reached SR $=
1.00$. Finally, for SBC we also provided a comparison of the performance in
simple crossing circle scenarios with $N = [5,10,20,50]$ (see
Table~\ref{tab:sbccomparison}). Since SBC performance may vary on the basis of
the parameter tuning, we considered the provided scenario, where robots
encumbrance $\delta_i = 0.0675$ (m), and maximum velocity $v_{i,\max}= 0.2$
(m/s) and we compared it with our method in the same conditions. Contrary to
other methods seen before, this can be considered a fair comparison, since SBC
does not use unobservable information as our method. In this case, RBL performs
better also in terms of time to reach the goals.

\begin{table}
\renewcommand{\tabcolsep}{0.05cm} 
\small
    \centering
    \begin{tabular}{|c|c|c|c|c|c|}
    \hline 
     & $N$ &$\eta$  &  max time (s) & mean speed (m/s) & succ. rate  \\ \hline
     &  5 & 0.00253& 42.70 & 0.131 & 1.00 \\ 
   SBC &  10 & 0.00506 &45.96 &0.125 &1.00 \\ 
     &  20& 0.0101& 56.33 & 0.118 &1.00 \\ 
     & 50&  0.0252 & 71.08 & 0.106 & 1.00 \\ \hline
       & $N$ & $\eta$ &  max time (s) & mean speed (m/s)& succ. rate  \\ \hline
       &  5 &  0.00253 & 29.79 & 0.195 &1.00  \\ 
   RBL (ours)  &  10 &0.00506& 32.17& 0.192 &1.00  \\ 
    &  20& 0.0101 & 39.46 & 0.185& 1.00 \\ 
  & 50& 0.0252 & 49.36 & 0.180 & 1.00 \\ \hline
   
    \end{tabular}
    \vspace{10pt}
    \caption{Circle crossing: quantitative data with unicycle model. We compared our method (RBL) with SBC~\cite{wang2017safety}. }
   \label{tab:sbccomparison}
\end{table}  



In conclusion, with respect to the state of the art, our RBL algorithm often
achieves similar performance in terms of time to reach the goal, nevertheless it
consistently attains a success rate of 1.00, and it does so without relying on
unobservable information (e.g., future intentions of the other robots), the neighbours
velocities, or synchronization between the robots. For the best of our
knowledge, this is not provided by any other distributed algorithm in the
literature. In this section
we provided a relevant amount of comparisons with state
of the art algorithms (including also the ones that uses observation).
The main result that we obtained can be synthesized as follows: our algorithm
outperforms all the tested algorithms in terms of success rate, i.e., we
always obtained SR $= 1.00$ in all the tested scenarios, while
LB, RL-RVO, SARL, GA3C-CADRL, NH-ORCA, RVO2,
SBC, BVC, GCBF+, DMPC, and LCS obtained a SR $< 1.00$.

We want to underline that if a reliable communication network and a good computational power are available, then IMPC-DR~\cite{chen2022recursive} and ASAP~\cite{chen2023asynchronous} are valid solutions that provide a success rate of SR~$= 1.00$ similarly as our.
\begin{table}
\renewcommand{\tabcolsep}{0.03cm} 
\small
    \centering
    \begin{tabular}{|c|c|c|c|c|c|}
    \hline 
     & $N$ &$\eta$  &  max time (s) & mean speed (m/s) & succ. rate  \\ \hline
     &  6 & 0.015& 7.10 & 1.19 & 1.00 \\ 
   RL-RVO &  20 & 0.05 &13.84 &0.73 &0.71 \\ 
     &  30& 0.075& 21.89 & 0.62 & 0.10 \\ 
     & 100&  0.0625 & - & - & 0.00 \\ \hline
       & $N$ & $\eta$ &  max time (s) & mean speed (m/s)& succ. rate  \\ \hline
       &  6 &0.015 & 13.42 & 0.66 &1.00  \\ 
  
   LB   &  20 &0.05 & -& - &0.00  \\ 
    &  30& 0.075 & - & -& 0.00 \\ 
  & 100& 0.0625 & - & - & 0.00 \\ \hline
     & $N$  & $\eta$ & max time (s) & mean speed (m/s)  & succ. rate \\ \hline
     &  6 &0.015 & 9.86 & 0.89 &1.00 \\ 
RBL (ours)  &  20&0.05 & 17.85 & 0.65 & 1.00 \\ 
    & 30&0.075 &26.16  &0.59 & 1.00 \\ 
         & 100&  0.0625 & 28.76 & 0.39 &1.00 \\ \hline

    \end{tabular}
    \vspace{10pt}
    \caption{Circle crossing: quantitative data with unicycle model. We compared
    our method (RBL) with LB~\cite{boldrer2020lloyd} and
    RL-RVO~\cite{han2022reinforcement} method, since SARL~\cite{chen2019crowd},
    GA3C-CADRL~\cite{everett2018motion} and NH-ORCA~\cite{alonso2013optimal}
    underperform it according to~\cite{han2022reinforcement} (Table II). Notice
    that in RL-RVO, each robot has to collect unobservable information, i.e., the method relies on a communication network. }
   \label{tab:sota}
\end{table}  

\subsection{Computational time}\label{sec:subcomput}
The computational time of the proposed approach depends mainly on $r_{s,i}$ (assuming a fixed space discretization step d$x = 0.075$~(m)), since we compute the centroid position numerically. We tested the algorithm in Python on an AMD Ryzen 7 5800H. By considering that $r_{s,i}$ should be limited to a maximum amount of $2$~(m) (notice that $r_{s,i}$ is half the sensing radius capability) a real-time implementation is largely feasible. In fact we measured for $r_{s,i} = 2.0$~(m) an average computational time of $t_c = 10$~(ms), for $r_{s,i} = 1.5$~(m) we measured $t_c = 5.5$~(ms), while for $r_{s,i} = 1.0$~(m) the computational time reduces to $t_c = 2$~(ms).
The addition of the MPC to account for dynamic constraints has an impact on the computational time. However, despite a clear increase in the computational time, it does not prevent to be used for real-time applications (as it is shown in the experimental result Sec.~\ref{sec:Experimental results}). We measured an increase of $16$~(ms) on average.

\section{Experimental results}
\label{sec:Experimental results}
We tested our RBL algorithm on different robotic platforms. In particular, car-like robots, unicyle robots, omnidirectional robots, and aerial robots. Notice that in the following implementations we relied on a communication network to share the neighbors' positions and encumbrances. Nevertheless, by means of onboard sensors such as lidars, cameras or uvdar~\cite{walter2019uvdar}, the neighbors' positions can be easily inferred (Assumption~\ref{ass:assumption1}), hence the functioning of the algorithm would be comparable (or better since less delays are introduced) in the absence of a communication network.  In the car-like case, we used $N=3$ scaled-down car-like robots with $\delta_i = 0.2$~(m), the robots consist of a modified Waveshare JetRacer Pro AI kit. We selected $r_{s,i} = 1.5$~(m), $\beta^D = 0.5$, $d_1=d_3= 0.15$ and, $d_2=d_4=0.45$. 
 In the following, we report the results obtained from a crossing circle scenario and a random room scenario with $N = 3$. In both the scenarios the robots converge safely to their goal positions, in accordance with the theory and the simulation results.  The dynamic constraint~\eqref{eq:MPC dyn_constraints_generic} is now described as the following kinematic bicycle model:
\begin{align*}
\begin{bmatrix}\dot{x}\\\dot{y}\\\dot{\eta}\\\dot{v}\end{bmatrix}&=\begin{bmatrix}v\cos{\eta}\\v\sin{\eta}\\\frac{v \tan(\delta)}{l}\\h(\tau, v)\end{bmatrix},
\end{align*}
where $x,y,\eta,v$ are respectively the $x$ and $y$ position of the rear axle, the orientation of the robot and the longitudinal velocity, $l$ is the length of the robot, $\tau$ and $\delta$ are the throttle and steering angle inputs, while $h(\tau,v)$ is the motor characteristic curve. Figs.~\ref{fig:exp1} and~\ref{fig:exp2} report two experiments in the crossing circle and random room scenarios, respectively. We depicted with solid lines the trajectories followed by the robots and with crosses the goal locations. The robots safely converged to their goal locations. In Table~\ref{tab:exp} we report the quantitative data.

\begin{figure}
\centering
\includegraphics[width=1\columnwidth]{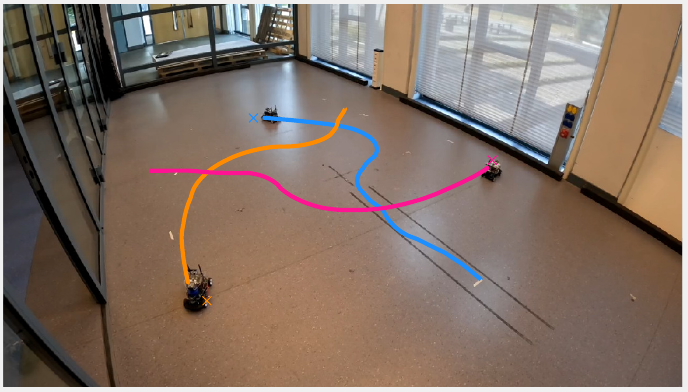}
    \caption{Experiments with $N= 3$ car-like robots in a crossing circle scenario.}
    \label{fig:exp1}
\end{figure}

\begin{figure}
\centering
\includegraphics[width=1\columnwidth]{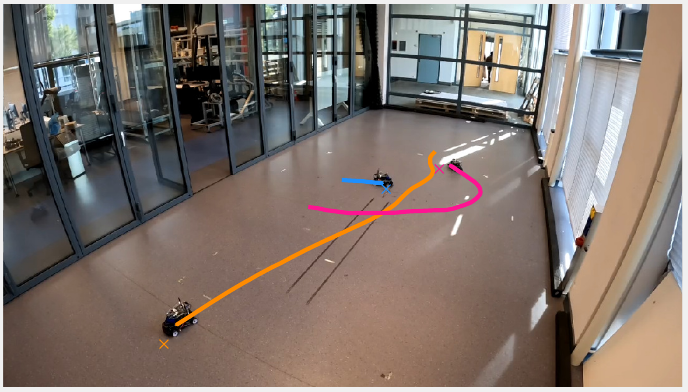}
    \caption{Experiments with $N=3$ car-like robots in a random room scenario.}
    \label{fig:exp2}
\end{figure}



\begin{table}
  \small
    \centering
    \renewcommand{\tabcolsep}{0.05cm} 
    \begin{tabular}{|c|c|c|c|c|c|}
    \hline 
  RBL  & $N$ & $\eta$ & max time (s) & mean speed (m/s)& RSR \\ \hline
   circle ($1.5$ m) & 3 & 0.0075 & 7.69 & 0.428& 1.00 \\ \hline
   room ($5\times 3$ m$^2$)& 3 & 0.014 & 9.50 & 0.448 &1.00 \\ \hline
   
    \end{tabular}
    \vspace{10pt}
    \caption{Quantitative experimental results for $N=3$ car-like robots in the crossing circle scenario and in the random room scenario with $v^D=0.5$~(m/s) (see Figure~\ref{fig:exp1} and~\ref{fig:exp2}). }
    \label{tab:exp}
\end{table}

For the unicycle robots case, we used $N=3$ Clearpath Jackals with $r_{s,i}=
1.5$, $\delta_i = 0.3$~(m), $\beta^D =0.5$, $d_1=d_3=0.15$, and $d_2=d_4=0.6$.
Also in this case, we report the results obtained from a crossing circle
scenario and a random room scenario with $N = 3$. Again, the robots converge
safely to their goal locations. In this case the dynamic
constraint~\eqref{eq:MPC dyn_constraints_generic} can be written as:
\begin{align*}
  \begin{bmatrix}\dot{x}\\\dot{y}\\\dot{\theta}\\\end{bmatrix}&=\begin{bmatrix}v\cos{\eta}\\v\sin{\eta}\\
\omega \end{bmatrix}, \end{align*} where $x,y,\theta,v,\omega$ are respectively
the $x$ and $y$ position of the rear axle, the orientation of the robot, the
longitudinal and the angular velocities. Figs.~\ref{fig:exp1j}
and~\ref{fig:exp2j} report two experiments in the crossing circle and random
room scenarios, respectively. In Table~\ref{tab:expj} we report the quantitative
data. 

\begin{figure}
\centering
\includegraphics[width=1\columnwidth]{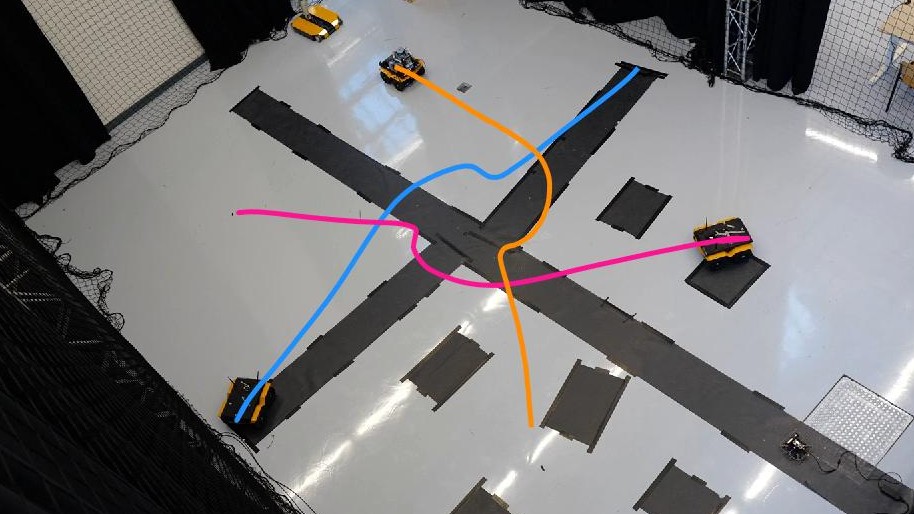}
    \caption{Experiments with $N= 3$ unicycle-like robots in a crossing circle scenario. }
    \label{fig:exp1j}
\end{figure}

\begin{figure}
\centering
\includegraphics[width=1\columnwidth]{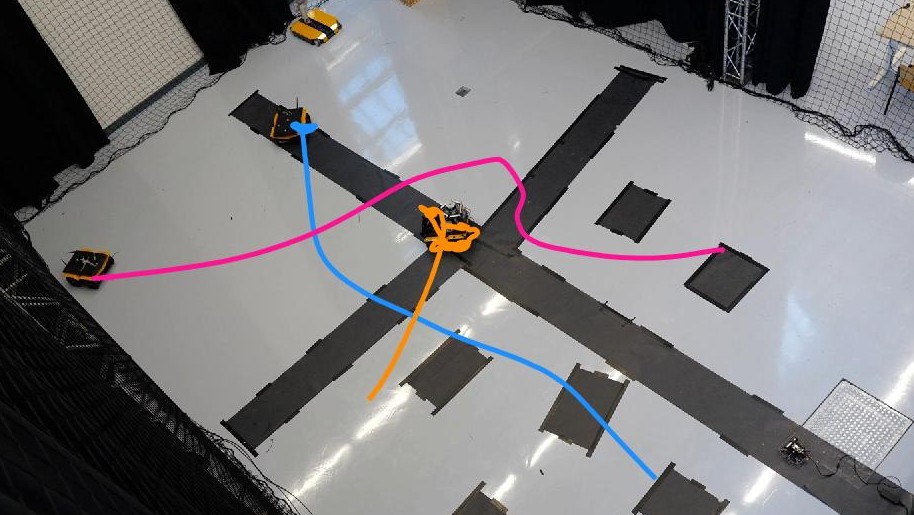}
    \caption{Experiments with $N=3$ unicycle-like robots in a random room scenario. }
    \label{fig:exp2j}
\end{figure}



\begin{table}
  \small
    \centering
    \renewcommand{\tabcolsep}{0.05cm} 
    \begin{tabular}{|c|c|c|c|c|c|}
    \hline 
  RBL  & $N$ & $\eta$ & max time (s) & mean speed (m/s)& RSR \\ \hline
   circle ($3$ m) & 3 & 0.030 & 16.90 & 0.415& 1.00 \\ \hline
   room ($7\times 4$ m$^2$)& 3 & 0.010 & 21.90& 0.442 &1.00 \\ \hline
   
    \end{tabular}
    \vspace{10pt}
    \caption{Quantitative experimental results for $N=3$ unicycle-like robots in the crossing circle scenario and in the random room scenario $v^D=0.5$~(m/s) (see Figure~\ref{fig:exp1} and~\ref{fig:exp2}). }
    \label{tab:expj}
\end{table}

We also report three experiments with heterogenous agents. In
particular, in Figure~\ref{fig:exp4_1} we considered the case
of $N=4$ robots with three Clearpath Jackals and one Clearpath
Dingo. The Clearpath Dingo is an omnidirectional robot hence
it can directly take as control inputs the velocities $\dot
{p}_i$. In Figure~\ref{fig:exp5_1}, we considered the case of
$N=5$ agents, with three Clearpath Jackals, one Clearpath
Dingo and one Human being. In this case the human being is
seen by the robots as another robot, hence the algorithm does
not change. The only difference lies in the human being
motion, which does not follow the RBL control law.
Nevertheless, the robots avoided collisions and achieved their
goal locations. Finally, in Figure~\ref{fig:exp7robots}, we
reported the obtained paths by using $N= 7$ aerial robots
(four DJI F450 and three Holybro
X500~\cite{hert2022mrs,hert2023mrs}) in the crossing circle
scenario. In this case, we considered $\delta_i = 2.50$ (m),
$r_{s,i} = 3.0$ (m), $d_1=d_3= 0.2$, $d_2=d_4=1.0$,
$\beta_i^D=1.5$ and $\eta = 0.194$. To reach the centroid
positions, we relied on the MRS UAV system~\cite{baca2021mrs}.
For the localization the robots relied on GPS. Even in the
presence of uncertainties, tracking errors, and imperfect
communication, each robot satisfied $p_k \in \mathcal{A}$ at
every time step, and the robots safely converged to their goal
regions. In Fig.~\ref{fig:distancesUAVs} we depicted the
distances as a function of time, notice that the
robots reach their goal region without exceeding the safe
minimum distance, which is set to $2\delta_i = 5~(m)$.
Videos of the experiments are provided in the multimedia material.

\begin{figure}
\centering
\includegraphics[width=1\columnwidth]{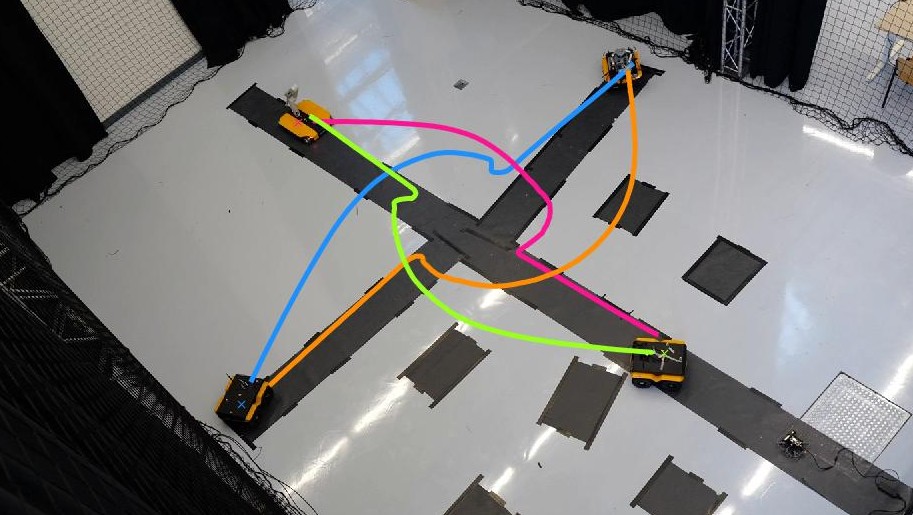}
    \caption{Experiments with $N= 4$ robots, three unicycle-like robots and one holonomic robot in a crossing circle scenario. }
    \label{fig:exp4_1}
\end{figure}

\begin{figure}
\centering
\includegraphics[width=1\columnwidth]{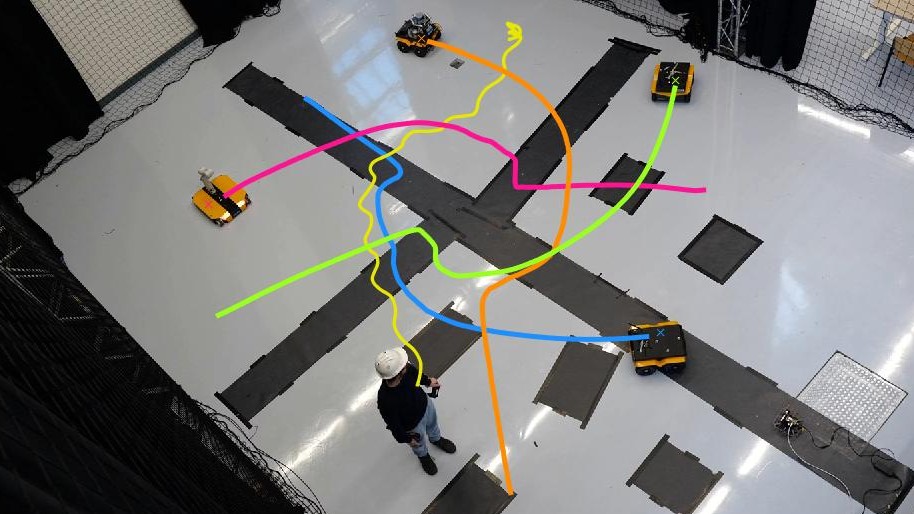}
    \caption{Experiments with $N=5$ agents, three unicycle-like robots, one holonomic robot and one human being in a crossing circle scenario. }
    \label{fig:exp5_1}
\end{figure}

\begin{figure}[t]
\centering
\includegraphics[width=1\columnwidth]{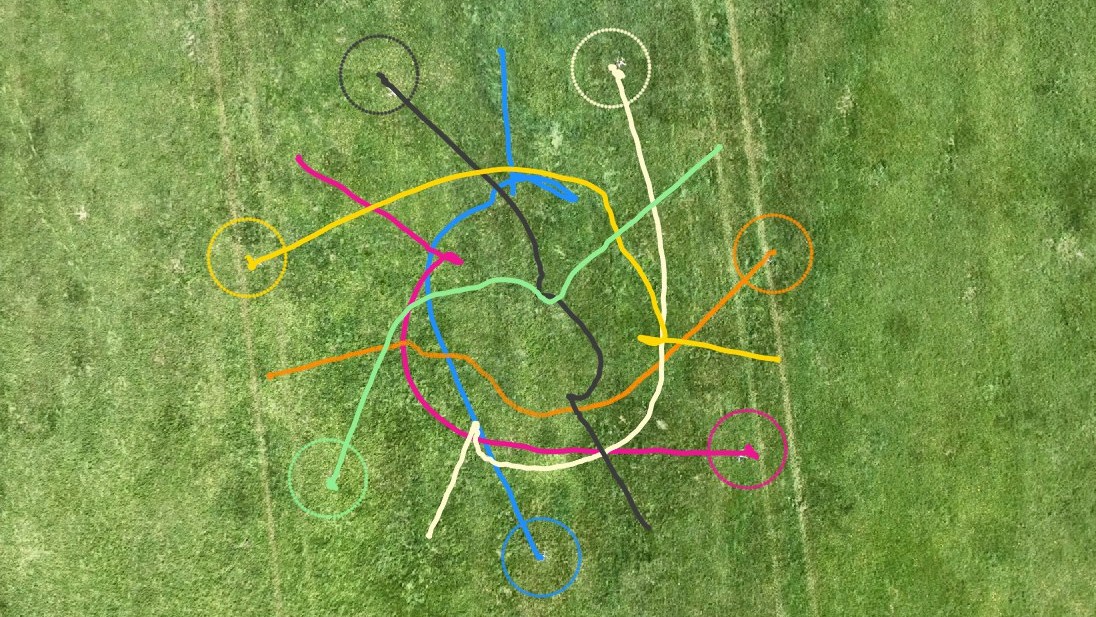}
    \caption{Experiment with $N=7$ aerial robots in a crossing circle scenario, the colored circles of radius $\delta_i = 2.5$~(m) indicate the final positions of the aerial robots. }
    \label{fig:exp7robots}
\end{figure}

\begin{figure}
\centering
\includegraphics[width=1\columnwidth]{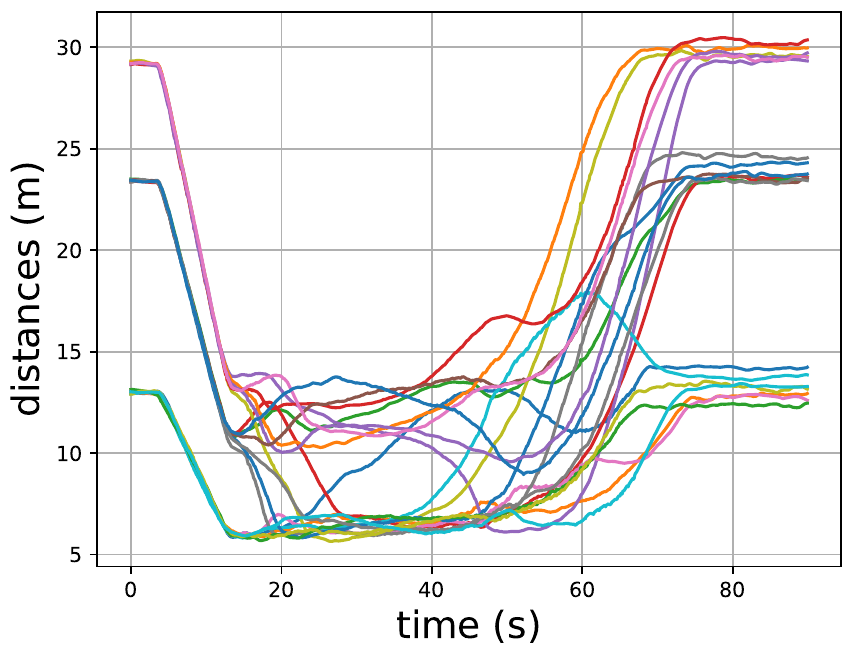}
    \caption{Experiments with $N= 7$ aerial robots, distances between robots as a function of time.}
    \label{fig:distancesUAVs}
\end{figure}

\section{Conclusions}
\label{sec:Conclusions}
This paper provided two solutions for multi-robot motion planning and control, namely rule-based Lloyd  (RBL) and Learning Lloyd-based (LLB) algorithm. Our solutions prove beneficial in a settings lacking of a centralized computational unit or a dependable communication network. In the case of RBL, we demonstrated both theoretically and through a comprehensive simulation campaign that each robot safely converges to its goal region. While through LLB, we showed that the Lloyd-based algorithm can be used as a safety-layer for learning-based algorithms. We believe that it can pave the way towards safe guaranteed learning-based algorithms and online learning as well.
We provided also experimental results for RBL approach with car-like robots,
unicycle robots, omnidirectional robots, and aerial robots on the field.

The main limitations of the proposed approach are the following: we can only
guarantee to converge in a proximity of the goal position, the parameters have
to be chosen carefully to meet the requirements to guarantee convergence, even
if we provided a rigorous way to select most of them, a tuning process is
necessary in order to impose the desired behavior to the robot motion (e.g.,
more aggressive or more conservative), the weight matrix $Q$ in the MPC cost
function~\eqref{eq:Jmpc} have to be carefully chosen in order to obtain the
desired behavior. Finally, we have to be aware that the numerical simulations
give an approximation of the centroid position. To obtain a better approximation
we have to pay in computational efficiency (by decreasing d$x$).

In the future, our objectives include addressing some of the mentioned
limitations, i.e., obtaining a closed form solution for the centroid position,
and design an adaptive law for the $d_2$ parameter in~\eqref{eq:rho} to converge
to the goal position. We plan to do experiments in scenarios that include
multiple non-cooperative or partially-cooperative agents in the mission space
(i.e., multiple human beings). We plan to do the experiments relying only on
onboard sensors to extrapolate the neighbors' positions. Finally, we plan to
further develop the LLB algorithm by testing the benefits of the LB layer during
training on other learning-based algorithms, and by synthesizing hybrid
solutions with RBL to provide guarantees of convergence.

\begin{acks}
This research is supported by the NWO-TTW Veni project HARMONIA (no. 18165).

The source code of the presented approach is available at https://github.com/manuelboldrer/RBL

The video associated with the paper can be found in the multimedia material.
\end{acks}
\bibliographystyle{SageH}
\bibliography{reference}

\begin{thebibliography}{81}
\providecommand{\natexlab}[1]{#1}
\providecommand{\url}[1]{\texttt{#1}}
\providecommand{\urlprefix}{URL }
\expandafter\ifx\csname urlstyle\endcsname\relax
  \providecommand{\doi}[1]{DOI:\discretionary{}{}{}#1}\else
  \providecommand{\doi}{DOI:\discretionary{}{}{}\begingroup
  \urlstyle{rm}\Url}\fi

\bibitem[{Abdullhak and Vardy(2021)}]{abdullhak2021deadlock}
Abdullhak M and Vardy A (2021) Deadlock prediction and recovery for distributed
  collision avoidance with buffered voronoi cells.
\newblock In: \emph{2021 IEEE/RSJ International Conference on Intelligent
  Robots and Systems (IROS)}. IEEE, pp. 429--436.

\bibitem[{Adajania et~al.(2023)Adajania, Zhou, Singh and
  Schoellig}]{adajania2023amswarm}
Adajania VK, Zhou S, Singh AK and Schoellig AP (2023) Amswarm: An alternating
  minimization approach for safe motion planning of quadrotor swarms in
  cluttered environments.
\newblock In: \emph{International Conference on Robotics and Automation
  (ICRA)}. IEEE, pp. 1421--1427.

\bibitem[{Alonso-Mora et~al.(2013)Alonso-Mora, Breitenmoser, Rufli, Beardsley
  and Siegwart}]{alonso2013optimal}
Alonso-Mora J, Breitenmoser A, Rufli M, Beardsley P and Siegwart R (2013)
  Optimal reciprocal collision avoidance for multiple non-holonomic robots.
\newblock In: \emph{Distributed Autonomous Robotic Systems: The 10th
  International Symposium}. Springer, pp. 203--216.

\bibitem[{Arul and Manocha(2021)}]{arul2021v}
Arul SH and Manocha D (2021) V-rvo: Decentralized multi-agent collision
  avoidance using voronoi diagrams and reciprocal velocity obstacles.
\newblock In: \emph{2021 IEEE/RSJ International Conference on Intelligent
  Robots and Systems (IROS)}. IEEE, pp. 8097--8104.

\bibitem[{Arul et~al.(2023)Arul, Park and Manocha}]{arul2023ds}
Arul SH, Park JJ and Manocha D (2023) Ds-mpepc: Safe and deadlock-avoiding
  robot navigation in cluttered dynamic scenes.
\newblock In: \emph{2023 IEEE/RSJ International Conference on Intelligent
  Robots and Systems (IROS)}. IEEE, pp. 2256--2263.

\bibitem[{Baca et~al.(2021)Baca, Petrlik, Vrba, Spurny, Penicka, Hert and
  Saska}]{baca2021mrs}
Baca T, Petrlik M, Vrba M, Spurny V, Penicka R, Hert D and Saska M (2021) The
  mrs uav system: Pushing the frontiers of reproducible research, real-world
  deployment, and education with autonomous unmanned aerial vehicles.
\newblock \emph{Journal of Intelligent \& Robotic Systems} 102(1): 26.

\bibitem[{Bastarache et~al.(2023)Bastarache, Nielsen and Smith}]{smithICRA2023}
Bastarache JL, Nielsen C and Smith SL (2023) On legible and predictable robot
  navigation in multi-agent environments.
\newblock In: \emph{2023 IEEE international conference on robotics and
  automation (ICRA)}. IEEE, pp. 5508--5514.

\bibitem[{Boldrer et~al.(2020{\natexlab{a}})Boldrer, Andreetto, Divan, Palopoli
  and Fontanelli}]{boldrer2020socially}
Boldrer M, Andreetto M, Divan S, Palopoli L and Fontanelli D
  (2020{\natexlab{a}}) Socially-aware reactive obstacle avoidance strategy
  based on limit cycle.
\newblock \emph{IEEE Robotics and Autom. Lett.} 5(2): 3251--3258.

\bibitem[{Boldrer et~al.(2022{\natexlab{a}})Boldrer, Antonucci, Bevilacqua,
  Palopoli and Fontanelli}]{boldrer2022multi}
Boldrer M, Antonucci A, Bevilacqua P, Palopoli L and Fontanelli D
  (2022{\natexlab{a}}) Multi-agent navigation in human-shared environments: A
  safe and socially-aware approach.
\newblock \emph{Robotics and Autonomous Systems} 149: 103979.

\bibitem[{Boldrer et~al.(2021)Boldrer, Bevilacqua, Palopoli and
  Fontanelli}]{boldrer2021graph}
Boldrer M, Bevilacqua P, Palopoli L and Fontanelli D (2021) Graph connectivity
  control of a mobile robot network with mixed dynamic multi-tasks.
\newblock \emph{IEEE Rob. and Autom. Lett.} 6(2): 1934--1941.

\bibitem[{Boldrer et~al.(2019)Boldrer, Fontanelli and
  Palopoli}]{boldrer2019coverage}
Boldrer M, Fontanelli D and Palopoli L (2019) Coverage control and distributed
  consensus-based estimation for mobile sensing networks in complex
  environments.
\newblock In: \emph{2019 IEEE 58th Conference on Decision and Control (CDC)}.
  IEEE, pp. 7838--7843.

\bibitem[{Boldrer et~al.(2020{\natexlab{b}})Boldrer, Palopoli and
  Fontanelli}]{boldrer2020lloyd}
Boldrer M, Palopoli L and Fontanelli D (2020{\natexlab{b}}) Lloyd-based
  approach for robots navigation in human-shared environments.
\newblock In: \emph{2020 IEEE/RSJ International Conference on Intelligent
  Robots and Systems (IROS)}. IEEE, pp. 6982--6989.

\bibitem[{Boldrer et~al.(2020{\natexlab{c}})Boldrer, Palopoli and
  Fontanelli}]{boldrer2020socially1}
Boldrer M, Palopoli L and Fontanelli D (2020{\natexlab{c}}) Socially-aware
  multi-agent velocity obstacle based navigation for nonholonomic vehicles.
\newblock In: \emph{2020 IEEE 44th Annual Computers, Software, and Applications
  Conference (COMPSAC)}. IEEE, pp. 18--25.

\bibitem[{Boldrer et~al.(2022{\natexlab{b}})Boldrer, Palopoli and
  Fontanelli}]{boldrer2022unified}
Boldrer M, Palopoli L and Fontanelli D (2022{\natexlab{b}}) A unified
  lloyd-based framework for multi-agent collective behaviours.
\newblock \emph{Robotics and Autonomous Systems} 156: 104207.

\bibitem[{Brito et~al.(2021)Brito, Everett, How and Alonso-Mora}]{brito2021go}
Brito B, Everett M, How JP and Alonso-Mora J (2021) Where to go next: Learning
  a subgoal recommendation policy for navigation in dynamic environments.
\newblock \emph{IEEE Robotics and Automation Letters} 6(3): 4616--4623.

\bibitem[{{\v{C}}{\'a}p et~al.(2015){\v{C}}{\'a}p, Vok{\v{r}}{\'\i}nek and
  Kleiner}]{vcap2015complete}
{\v{C}}{\'a}p M, Vok{\v{r}}{\'\i}nek J and Kleiner A (2015) Complete
  decentralized method for on-line multi-robot trajectory planning in
  well-formed infrastructures.
\newblock In: \emph{Proceedings of the international conference on automated
  planning and scheduling}, volume~25. pp. 324--332.

\bibitem[{Celi et~al.(2019)Celi, Wang, Pallottino and
  Egerstedt}]{celi2019deconfliction}
Celi F, Wang L, Pallottino L and Egerstedt M (2019) Deconfliction of motion
  paths with traffic inspired rules.
\newblock \emph{IEEE Robotics and Automation Letters} 4(2): 2227--2234.

\bibitem[{Chen et~al.(2019)Chen, Liu, Kreiss and Alahi}]{chen2019crowd}
Chen C, Liu Y, Kreiss S and Alahi A (2019) Crowd-robot interaction: Crowd-aware
  robot navigation with attention-based deep reinforcement learning.
\newblock In: \emph{2019 international conference on robotics and automation
  (ICRA)}. IEEE, pp. 6015--6022.

\bibitem[{Chen et~al.(2023{\natexlab{a}})Chen, Wang, Miao, Feng, Zhou, Wang and
  Wang}]{chen2023toward}
Chen L, Wang Y, Miao Z, Feng M, Zhou Z, Wang H and Wang D (2023{\natexlab{a}})
  Toward safe distributed multi-robot navigation coupled with variational
  bayesian model.
\newblock \emph{IEEE Transactions on Automation Science and Engineering} .

\bibitem[{Chen et~al.(2023{\natexlab{b}})Chen, Dong and
  Li}]{chen2023asynchronous}
Chen Y, Dong H and Li Z (2023{\natexlab{b}}) Asynchronous spatial allocation
  protocol for trajectory planning of heterogeneous multi-agent systems.
\newblock \emph{arXiv preprint arXiv:2309.07431} .

\bibitem[{Chen et~al.(2024)Chen, Guo and Li}]{chen2022recursive}
Chen Y, Guo M and Li Z (2024) Deadlock resolution and recursive feasibility in
  mpc-based multi-robot trajectory generation.
\newblock \emph{IEEE Transactions on Automatic Control} .

\bibitem[{Chen et~al.(2023{\natexlab{c}})Chen, Wang, Guo and Li}]{10050739}
Chen Y, Wang C, Guo M and Li Z (2023{\natexlab{c}}) Multi-robot trajectory
  planning with feasibility guarantee and deadlock resolution: An
  obstacle-dense environment.
\newblock \emph{IEEE Robotics and Automation Letters} 8(4): 2197--2204.
\newblock \doi{10.1109/LRA.2023.3248377}.

\bibitem[{Chen et~al.(2017{\natexlab{a}})Chen, Everett, Liu and
  How}]{chen2017socially}
Chen YF, Everett M, Liu M and How JP (2017{\natexlab{a}}) Socially aware motion
  planning with deep reinforcement learning.
\newblock In: \emph{2017 IEEE/RSJ International Conference on Intelligent
  Robots and Systems (IROS)}. IEEE, pp. 1343--1350.

\bibitem[{Chen et~al.(2017{\natexlab{b}})Chen, Liu, Everett and
  How}]{chen2017decentralized}
Chen YF, Liu M, Everett M and How JP (2017{\natexlab{b}}) Decentralized
  non-communicating multiagent collision avoidance with deep reinforcement
  learning.
\newblock In: \emph{2017 IEEE international conference on robotics and
  automation (ICRA)}. IEEE, pp. 285--292.

\bibitem[{Cheng et~al.(2018)Cheng, Cheng, Meng and Zhang}]{cheng2018autonomous}
Cheng J, Cheng H, Meng MQH and Zhang H (2018) Autonomous navigation by mobile
  robots in human environments: A survey.
\newblock In: \emph{2018 IEEE international conference on robotics and
  biomimetics (ROBIO)}. IEEE, pp. 1981--1986.

\bibitem[{Chung et~al.(2022)Chung, Youssef and Roidl}]{chung2022distributed}
Chung YM, Youssef H and Roidl M (2022) Distributed timed elastic band (dteb)
  planner: Trajectory sharing and collision prediction for multi-robot systems.
\newblock In: \emph{2022 International Conference on Robotics and Automation
  (ICRA)}. IEEE, pp. 10702--10708.

\bibitem[{Cortes et~al.(2004)Cortes, Martinez, Karatas and
  Bullo}]{cortes2004coverage}
Cortes J, Martinez S, Karatas T and Bullo F (2004) Coverage control for mobile
  sensing networks.
\newblock \emph{IEEE Transactions on robotics and Automation} 20(2): 243--255.

\bibitem[{Everett et~al.(2018)Everett, Chen and How}]{everett2018motion}
Everett M, Chen YF and How JP (2018) Motion planning among dynamic,
  decision-making agents with deep reinforcement learning.
\newblock In: \emph{2018 IEEE/RSJ International Conference on Intelligent
  Robots and Systems (IROS)}. IEEE, pp. 3052--3059.

\bibitem[{Fan et~al.(2020)Fan, Long, Liu and Pan}]{fan2020distributed}
Fan T, Long P, Liu W and Pan J (2020) Distributed multi-robot collision
  avoidance via deep reinforcement learning for navigation in complex
  scenarios.
\newblock \emph{The International Journal of Robotics Research} 39(7):
  856--892.

\bibitem[{Ferranti et~al.(2022)Ferranti, Lyons, Negenborn, Keviczky and
  Alonso-Mora}]{ferranti2022distributed}
Ferranti L, Lyons L, Negenborn RR, Keviczky T and Alonso-Mora J (2022)
  Distributed nonlinear trajectory optimization for multi-robot motion
  planning.
\newblock \emph{IEEE Transactions on Control Systems Technology} 31(2):
  809--824.

\bibitem[{Fox et~al.(1997)Fox, Burgard and Thrun}]{fox1997dynamic}
Fox D, Burgard W and Thrun S (1997) The dynamic window approach to collision
  avoidance.
\newblock \emph{IEEE Robotics \& Automation Magazine} 4(1): 23--33.

\bibitem[{Gayle et~al.(2009)Gayle, Moss, Lin and Manocha}]{gayle2009multi}
Gayle R, Moss W, Lin MC and Manocha D (2009) Multi-robot coordination using
  generalized social potential fields.
\newblock In: \emph{International Conference on Robotics and Automation}. IEEE,
  pp. 106--113.

\bibitem[{Gielis et~al.(2022)Gielis, Shankar and Prorok}]{gielis2022critical}
Gielis J, Shankar A and Prorok A (2022) A critical review of communications in
  multi-robot systems.
\newblock \emph{Current Robotics Reports} 3(4): 213--225.

\bibitem[{Giese et~al.(2014)Giese, Latypov and Amato}]{giese2014reciprocally}
Giese A, Latypov D and Amato NM (2014) Reciprocally-rotating velocity
  obstacles.
\newblock In: \emph{2014 IEEE International Conference on Robotics and
  Automation (ICRA)}. IEEE, pp. 3234--3241.

\bibitem[{Godoy et~al.(2020)Godoy, Guy, Gini and Karamouzas}]{godoy2020c}
Godoy J, Guy SJ, Gini M and Karamouzas I (2020) C-nav: Distributed coordination
  in crowded multi-agent navigation.
\newblock \emph{Robotics and Autonomous Systems} 133: 103631.

\bibitem[{Haarnoja et~al.(2017)Haarnoja, Tang, Abbeel and Levine}]{Haarnoja17}
Haarnoja T, Tang H, Abbeel P and Levine S (2017) Reinforcement learning with
  deep energy-based policies.
\newblock In: \emph{Proceedings of the 34th International Conference on Machine
  Learning (ICML)}.

\bibitem[{Han et~al.(2020)Han, Chen and Hao}]{han2020cooperative}
Han R, Chen S and Hao Q (2020) Cooperative multi-robot navigation in dynamic
  environment with deep reinforcement learning.
\newblock In: \emph{International Conference on Robotics and Automation
  (ICRA)}. IEEE, pp. 448--454.

\bibitem[{Han et~al.(2022)Han, Chen, Wang, Zhang, Gao, Hao and
  Pan}]{han2022reinforcement}
Han R, Chen S, Wang S, Zhang Z, Gao R, Hao Q and Pan J (2022) Reinforcement
  learned distributed multi-robot navigation with reciprocal velocity obstacle
  shaped rewards.
\newblock \emph{IEEE Robotics and Automation Letters} 7(3): 5896--5903.

\bibitem[{Helbing and Molnar(1995)}]{helbing1995social}
Helbing D and Molnar P (1995) Social force model for pedestrian dynamics.
\newblock \emph{Physical review E} 51(5): 4282.

\bibitem[{Hert et~al.(2023)Hert, Baca, Petracek, Kratky, Penicka, Spurny,
  Petrlik, Vrba, Zaitlik, Stoudek, Walter, Stepan, Horyna, Sramek, Ahmad,
  Silano, Bonilla~Licea, Stibinger, Nascimento and Saska}]{hert2023mrs}
Hert D, Baca T, Petracek P, Kratky V, Penicka R, Spurny V, Petrlik M, Vrba M,
  Zaitlik D, Stoudek P, Walter V, Stepan P, Horyna J, Sramek M, Ahmad A, Silano
  G, Bonilla~Licea D, Stibinger P, Nascimento T and Saska M (2023) Mrs drone: A
  modular platform for real-world deployment of aerial multi-robot systems.
\newblock \emph{Journal of Intelligent \& Robotic Systems} 108(4): 64.

\bibitem[{Hert et~al.(2022)Hert, Baca, Petracek, Kratky, Spurny, Petrlik, Vrba,
  Zaitlik, Stoudek, Walter, Stepan, Horyna, Pritzl, Silano, Bonilla~Licea,
  Stibinger, Penicka, Nascimento and Saska}]{hert2022mrs}
Hert D, Baca T, Petracek P, Kratky V, Spurny V, Petrlik M, Vrba M, Zaitlik D,
  Stoudek P, Walter V, Stepan P, Horyna J, Pritzl V, Silano G, Bonilla~Licea D,
  Stibinger P, Penicka R, Nascimento T and Saska M (2022) Mrs modular uav
  hardware platforms for supporting research in real-world outdoor and indoor
  environments.
\newblock In: \emph{2022 International Conference on Unmanned Aircraft Systems
  (ICUAS)}. pp. 1264--1273.
\newblock \doi{10.1109/ICUAS54217.2022.9836083}.

\bibitem[{Kloock and Alrifaee(2023)}]{kloock2023coordinated}
Kloock M and Alrifaee B (2023) Coordinated cooperative distributed
  decision-making using synchronization of local plans.
\newblock \emph{IEEE Transactions on Intelligent Vehicles} 8(2): 1292--1306.

\bibitem[{Kondo et~al.(2023)Kondo, Tordesillas, Figueroa, Rached, Merkel, Lusk
  and How}]{kondo2023robust}
Kondo K, Tordesillas J, Figueroa R, Rached J, Merkel J, Lusk PC and How JP
  (2023) Robust mader: Decentralized and asynchronous multiagent trajectory
  planner robust to communication delay.
\newblock In: \emph{2023 IEEE International Conference on Robotics and
  Automation (ICRA)}. IEEE, pp. 1687--1693.

\bibitem[{Lee et~al.(2019)Lee, Lee, Kim, Kosiorek, Choi and Teh}]{lee2019set}
Lee J, Lee Y, Kim J, Kosiorek A, Choi S and Teh YW (2019) Set transformer: A
  framework for attention-based permutation-invariant neural networks.
\newblock In: \emph{Int. Conf. on Mach. Learn.} pp. 3744--3753.

\bibitem[{Li et~al.(2021)Li, Tinka, Kiesel, Durham, Kumar and
  Koenig}]{li2021lifelong}
Li J, Tinka A, Kiesel S, Durham JW, Kumar TS and Koenig S (2021) Lifelong
  multi-agent path finding in large-scale warehouses.
\newblock In: \emph{Proceedings of the AAAI Conference on Artificial
  Intelligence}, volume~35. pp. 11272--11281.

\bibitem[{Li et~al.(2020)Li, Gama, Ribeiro and Prorok}]{li2020graph}
Li Q, Gama F, Ribeiro A and Prorok A (2020) Graph neural networks for
  decentralized multi-robot path planning.
\newblock In: \emph{2020 IEEE/RSJ International Conference on Intelligent
  Robots and Systems (IROS)}. IEEE, pp. 11785--11792.

\bibitem[{Liang et~al.(2018)Liang, Liaw, Nishihara, Moritz, Fox, Goldberg,
  Gonzalez, Jordan and Stoica}]{liang2018rllib}
Liang E, Liaw R, Nishihara R, Moritz P, Fox R, Goldberg K, Gonzalez J, Jordan M
  and Stoica I (2018) Rllib: Abstractions for distributed reinforcement
  learning.
\newblock In: \emph{International conference on machine learning}. PMLR, pp.
  3053--3062.

\bibitem[{Lloyd(1982)}]{lloyd1982least}
Lloyd S (1982) Least squares quantization in pcm.
\newblock \emph{IEEE transactions on information theory} 28(2): 129--137.

\bibitem[{Long et~al.(2018)Long, Fan, Liao, Liu, Zhang and
  Pan}]{long2018towards}
Long P, Fan T, Liao X, Liu W, Zhang H and Pan J (2018) Towards optimally
  decentralized multi-robot collision avoidance via deep reinforcement
  learning.
\newblock In: \emph{2018 IEEE international conference on robotics and
  automation (ICRA)}. IEEE, pp. 6252--6259.

\bibitem[{Luis and Schoellig(2019)}]{luis2019trajectory}
Luis CE and Schoellig AP (2019) Trajectory generation for multiagent
  point-to-point transitions via distributed model predictive control.
\newblock \emph{IEEE Robotics and Automation Letters} 4(2): 375--382.

\bibitem[{Luis et~al.(2020)Luis, Vukosavljev and Schoellig}]{luis2020online}
Luis CE, Vukosavljev M and Schoellig AP (2020) Online trajectory generation
  with distributed model predictive control for multi-robot motion planning.
\newblock \emph{IEEE Robotics and Automation Letters} 5(2): 604--611.

\bibitem[{Mavrogiannis and Knepper(2019)}]{mavrogiannis2019multi}
Mavrogiannis CI and Knepper RA (2019) Multi-agent path topology in support of
  socially competent navigation planning.
\newblock \emph{The International Journal of Robotics Research} 38(2-3):
  338--356.

\bibitem[{Orr and Dutta(2023)}]{orr2023multi}
Orr J and Dutta A (2023) Multi-agent deep reinforcement learning for
  multi-robot applications: A survey.
\newblock \emph{Sensors} 23(7): 3625.

\bibitem[{Park et~al.(2022{\natexlab{a}})Park, Jang and
  Kim}]{park2022decentralized}
Park J, Jang I and Kim HJ (2022{\natexlab{a}}) Decentralized deadlock-free
  trajectory planning for quadrotor swarm in obstacle-rich
  environments--extended version.
\newblock \emph{arXiv preprint arXiv:2209.09447} .

\bibitem[{Park et~al.(2022{\natexlab{b}})Park, Kim, Kim, Oh and
  Kim}]{park2022online}
Park J, Kim D, Kim GC, Oh D and Kim HJ (2022{\natexlab{b}}) Online distributed
  trajectory planning for quadrotor swarm with feasibility guarantee using
  linear safe corridor.
\newblock \emph{IEEE Robotics and Automation Letters} 7(2): 4869--4876.

\bibitem[{Patwardhan et~al.(2023)Patwardhan, Murai and Davison}]{9976221}
Patwardhan A, Murai R and Davison AJ (2023) Distributing collaborative
  multi-robot planning with gaussian belief propagation.
\newblock \emph{IEEE Robotics and Automation Letters} 8(2): 552--559.
\newblock \doi{10.1109/LRA.2022.3227858}.

\bibitem[{Pierson et~al.(2020)Pierson, Schwarting, Karaman and
  Rus}]{pierson2020weighted}
Pierson A, Schwarting W, Karaman S and Rus D (2020) Weighted buffered voronoi
  cells for distributed semi-cooperative behavior.
\newblock In: \emph{2020 IEEE international conference on robotics and
  automation (ICRA)}. IEEE, pp. 5611--5617.

\bibitem[{Qin et~al.(2023)Qin, Qin, Qiu, Liu, Li and Ma}]{qin2023srl}
Qin J, Qin J, Qiu J, Liu Q, Li M and Ma Q (2023) Srl-orca: A socially aware
  multi-agent mapless navigation algorithm in complex dynamic scenes.
\newblock \emph{IEEE Rob. and Autom. Lett.} 9(1): 143--150.

\bibitem[{Schulman et~al.(2017)Schulman, Wolski, Dhariwal, Radford and
  Klimov}]{schulmanPPO}
Schulman J, Wolski F, Dhariwal P, Radford A and Klimov O (2017) Proximal policy
  optimization algorithms.
\newblock \emph{ArXiv} abs/1707.06347.

\bibitem[{{\c{S}}enba{\c{s}}lar et~al.(2023){\c{S}}enba{\c{s}}lar, H{\"o}nig
  and Ayanian}]{csenbacslar2023rlss}
{\c{S}}enba{\c{s}}lar B, H{\"o}nig W and Ayanian N (2023) Rlss: real-time,
  decentralized, cooperative, networkless multi-robot trajectory planning using
  linear spatial separations.
\newblock \emph{Autonomous Robots} : 1--26.

\bibitem[{{\c{S}}enba{\c{s}}lar and Sukhatme(2023)}]{csenbacslar2023dream}
{\c{S}}enba{\c{s}}lar B and Sukhatme GS (2023) Dream: Decentralized real-time
  asynchronous probabilistic trajectory planning for collision-free multi-robot
  navigation in cluttered environments.
\newblock \emph{arXiv preprint arXiv:2307.15887} .

\bibitem[{Serra-G{\'o}mez et~al.(2023)Serra-G{\'o}mez, Montijano, B{\"o}hmer
  and Alonso-Mora}]{serragomez2023a}
Serra-G{\'o}mez {\'A}, Montijano E, B{\"o}hmer W and Alonso-Mora J (2023)
  Active classification of moving targets with learned control policies.
\newblock \emph{IEEE Robotics and Automation Letters} 8(6): 3717--3724.

\bibitem[{Sharon et~al.(2015)Sharon, Stern, Felner and
  Sturtevant}]{sharon2015conflict}
Sharon G, Stern R, Felner A and Sturtevant NR (2015) Conflict-based search for
  optimal multi-agent pathfinding.
\newblock \emph{Artificial Intelligence} 219: 40--66.

\bibitem[{Sutton and Barto(2018)}]{sutton2018reinforcement}
Sutton RS and Barto AG (2018) \emph{Reinforcement learning: An introduction}.
\newblock MIT press.

\bibitem[{Tajbakhsh et~al.(2024)Tajbakhsh, Biegler and
  Johnson}]{tajbakhsh2023conflict}
Tajbakhsh A, Biegler LT and Johnson AM (2024) Conflict-based model predictive
  control for scalable multi-robot motion planning.
\newblock In: \emph{2024 IEEE International Conference on Robotics and
  Automation (ICRA)}. IEEE, pp. 14562--14568.

\bibitem[{Tan et~al.(2020)Tan, Fan, Pan and Manocha}]{tan2020deepmnavigate}
Tan Q, Fan T, Pan J and Manocha D (2020) Deepmnavigate: Deep reinforced
  multi-robot navigation unifying local \& global collision avoidance.
\newblock In: \emph{2020 IEEE/RSJ International Conference on Intelligent
  Robots and Systems (IROS)}. IEEE, pp. 6952--6959.

\bibitem[{Toumieh and Floreano(2024)}]{toumieh2024high}
Toumieh C and Floreano D (2024) High-speed motion planning for aerial swarms in
  unknown and cluttered environments.
\newblock \emph{arXiv preprint arXiv:2402.19033} .

\bibitem[{Trevisan and Alonso-Mora(2024)}]{trevisan2024biased}
Trevisan E and Alonso-Mora J (2024) Biased-mppi: Informing sampling-based model
  predictive control by fusing ancillary controllers.
\newblock \emph{IEEE Robotics and Automation Letters} .

\bibitem[{Van Den~Berg et~al.(2011)Van Den~Berg, Guy, Lin and
  Manocha}]{van2011reciprocal}
Van Den~Berg J, Guy SJ, Lin M and Manocha D (2011) Reciprocal n-body collision
  avoidance.
\newblock In: \emph{Robotics Research: The 14th International Symposium ISRR}.
  Springer, pp. 3--19.

\bibitem[{van~den Berg et~al.(2011)van~den Berg, Guy, Snape, Lin and
  Manocha}]{van2011rvo2}
van~den Berg J, Guy SJ, Snape J, Lin MC and Manocha D (2011) Rvo2 library:
  Reciprocal collision avoidance for real-time multi-agent simulation.
\newblock \emph{See https://gamma. cs. unc. edu/RVO2} .

\bibitem[{Van~den Berg et~al.(2008)Van~den Berg, Lin and
  Manocha}]{van2008reciprocal}
Van~den Berg J, Lin M and Manocha D (2008) Reciprocal velocity obstacles for
  real-time multi-agent navigation.
\newblock In: \emph{2008 IEEE international conference on robotics and
  automation}. Ieee, pp. 1928--1935.

\bibitem[{Walter et~al.(2019)Walter, Staub, Franchi and
  Saska}]{walter2019uvdar}
Walter V, Staub N, Franchi A and Saska M (2019) Uvdar system for visual
  relative localization with application to leader--follower formations of
  multirotor uavs.
\newblock \emph{IEEE Robotics and Automation Letters} 4(3): 2637--2644.

\bibitem[{Wang et~al.(2017)Wang, Ames and Egerstedt}]{wang2017safety}
Wang L, Ames AD and Egerstedt M (2017) Safety barrier certificates for
  collisions-free multirobot systems.
\newblock \emph{IEEE Transactions on Robotics} 33(3): 661--674.

\bibitem[{Xie and Dames(2023)}]{xie2023drl}
Xie Z and Dames P (2023) Drl-vo: Learning to navigate through crowded dynamic
  scenes using velocity obstacles.
\newblock \emph{IEEE Transactions on Robotics} 39(4): 2700--2719.

\bibitem[{Zhang et~al.(2024)Zhang, So, Garg and Fan}]{zhang2024gcbf+}
Zhang S, So O, Garg K and Fan C (2024) Gcbf+: A neural graph control barrier
  function framework for distributed safe multi-agent control.
\newblock \emph{arXiv preprint arXiv:2401.14554} .

\bibitem[{Zheng and Li(2024)}]{zheng2024mcca}
Zheng R and Li S (2024) Mcca: A decentralized method for collision and deadlock
  avoidance with nonholonomic robots.
\newblock \emph{IEEE Robotics and Automation Letters} .

\bibitem[{Zhou et~al.(2017)Zhou, Wang, Bandyopadhyay and
  Schwager}]{zhou2017fast}
Zhou D, Wang Z, Bandyopadhyay S and Schwager M (2017) Fast, on-line collision
  avoidance for dynamic vehicles using buffered voronoi cells.
\newblock \emph{IEEE Robotics and Autom. Letters} 2(2): 1047--1054.

\bibitem[{Zhou et~al.(2021)Zhou, Zhu, Zhou, Xu and Gao}]{zhou2021ego}
Zhou X, Zhu J, Zhou H, Xu C and Gao F (2021) Ego-swarm: A fully autonomous and
  decentralized quadrotor swarm system in cluttered environments.
\newblock In: \emph{2021 IEEE international conference on robotics and
  automation (ICRA)}. IEEE, pp. 4101--4107.

\bibitem[{Zhu and Alonso-Mora(2019{\natexlab{a}})}]{zhu2019b}
Zhu H and Alonso-Mora J (2019{\natexlab{a}}) B-uavc: Buffered uncertainty-aware
  voronoi cells for probabilistic multi-robot collision avoidance.
\newblock In: \emph{2019 international symposium on multi-robot and multi-agent
  systems (MRS)}. IEEE, pp. 162--168.

\bibitem[{Zhu and Alonso-Mora(2019{\natexlab{b}})}]{zhu2019chance}
Zhu H and Alonso-Mora J (2019{\natexlab{b}}) Chance-constrained collision
  avoidance for mavs in dynamic environments.
\newblock \emph{IEEE Robotics and Automation Letters} 4(2): 776--783.

\bibitem[{Zhu et~al.(2022)Zhu, Brito and Alonso-Mora}]{zhu2022decentralized}
Zhu H, Brito B and Alonso-Mora J (2022) Decentralized probabilistic multi-robot
  collision avoidance using buffered uncertainty-aware voronoi cells.
\newblock \emph{Autonomous Robots} 46(2): 401--420.

\end{thebibliography}

\end{document}